\documentclass{article}
\usepackage{PRIMEarxiv}

\usepackage[numbers]{natbib}
\usepackage{amsmath}
\usepackage{amsfonts}
\usepackage{bm}
\usepackage{url}
\usepackage[section]{placeins}
\usepackage{verbatim} 

\usepackage{amsthm}
\usepackage{graphicx}
\newtheorem{theorem}{Theorem}
\newtheorem{corollary}{Corollary}
\newtheorem{definition}{Definition}

\newtheorem{remark}{Remark}
\DeclareMathOperator{\PE}{PE}

\title{Persistent Entropy as a Detector of Phase Transitions
}

\author{
  Marcos Gutiérrez-del-Pozo \\
  Universidad de Sevilla \\
  Sevilla, Spain \\
  \texttt{margutdel@alum.us.es}
  \And
   Matteo Rucco \\
  Spindox Labs \\
  Trento, Italy \\
  \texttt{ruccomatteo@gmail.com}
  \And
  Eduardo Paluzo-Hidalgo\thanks{Funded by the European Union’s Horizon 2020 research and innovation programme under the Marie Skłodowska-Curie grant agreement No. 101153039 (CHALKS), and by the IMUS–María de Maeztu grant CEX2024-001517-M (Apoyo a Unidades de Excelencia María de Maeztu), funded by MICIU/AEI/10.13039/501100011033.} \\
  Universidad de Sevilla \\
  Sevilla, Spain \\
  \texttt{epaluzo@us.es}
}

\begin{document}
\maketitle
	
	\begin{abstract}
		Persistent entropy is a scalar summary of persistence barcodes widely used to
		detect regime changes, yet there is no account of when a structural change in a
		barcode must produce a detectable change in entropy. We establish a
		model-agnostic theorem supplying such conditions. Treating persistence diagrams
		as random objects indexed by a control parameter, we identify a
		dispersion--condensation mechanism in the normalised persistence weights and
		derive an explicit lower bound on the entropy difference between the two
		regimes, valid with high probability at finite sample size and insensitive to
		the absolute scale of bar lifetimes. We also give a procedure for verifying the
		hypotheses on empirical barcodes. Applied to convolutional networks, the
		criterion shows that the circular organisation of learned filters reported by
		Gabrielsson and Carlsson emerges through a sharp topological phase transition,
		and locates its onset: within a few hundred iterations on MNIST, but an order of
		magnitude later on CIFAR-10. The same criterion detects the Kuramoto
		synchronization and Vicsek order--disorder transitions.
	
	\end{abstract}

	\keywords{
		Persistent entropy \and Topological Data Analysis \and Persistent homology \and Persistence diagram \and Convolutional Neural Networks (CNNs) \and Topological phase \and Phase transition \and Dynamical system}

\section{Introduction}
	
	Phase transitions are fundamental phenomena in the study of complex systems, marking qualitative reorganizations of collective behavior induced by variations of control parameters. In classical statistical physics, phase transitions are traditionally characterized through the emergence or disappearance of local order parameters within the Landau symmetry-breaking paradigm. However, many contemporary systems of interest, including coupled oscillators, active matter, biological collectives, and data-driven dynamical models, exhibit high-dimensional, noisy, and heterogeneous dynamics for which suitable order parameters may be unknown, difficult to define, or strongly model-dependent. This has motivated the search for alternative model-agnostic descriptors capable of capturing global structural changes associated with phase transitions.
	
	Topology provides a natural framework for this purpose, as topological descriptors capture global geometric and connectivity properties that remain invariant under continuous deformations. In particular, persistent homology, a central tool in topological data analysis~\cite{Edelsbrunner2002_Topological,Carlsson2009_Topology}, extracts multiscale topological information from point clouds, networks, and time-dependent data by tracking the birth and death of features across a filtration and representing them through persistence diagrams or barcodes. This approach has been successfully used to detect regime changes and phase transitions in several contexts, including synchronization dynamics~\cite{Stolz2017_FunctionalNetworks,Zabaleta2023_Rossler,Itabashi2021_TimeVariant}, collective motion~\cite{Topaz2015_Aggregation,Gonzalez2022_ScaleFree}, neural activity, and biological systems~\cite{Piangerelli2018_Seizures}. To facilitate interpretation, persistent entropy was later introduced as an information-theoretic scalar summary of persistence diagrams~\cite{chintakunta2015entropy, Rucco2016_ImmuneNetwork,Rucco2015_PersistentEntropy}. 
	
	Although persistent entropy has shown strong empirical performance in detecting regime changes~\cite{He2022_AubryAndre,Zabaleta2023_Rossler,Piangerelli2018_Seizures}, its theoretical foundations remain limited. In particular, there is no general framework explaining when a phase transition induces a structural reorganization of persistence diagrams that necessarily leads to a detectable change in persistent entropy. Since persistent entropy depends only on normalized lifetime weights, it is unclear under which structural conditions a redistribution of persistence mass produces a non-vanishing entropy gap.
	
	The relevance of this problem extends across both physics and machine learning. In physics, many contemporary phase transitions are characterized by changes in global or nonlocal organization rather than by classical local order parameters, motivating topological approaches capable of capturing structural reorganization beyond model-specific observables. At the same time, persistent entropy has emerged as a practical tool for analyzing learning dynamics, where it is used to monitor the evolution of learned representations during stochastic optimization. In both settings, persistence diagrams are naturally affected by stochasticity, finite-sample effects, and optimization noise, making a probabilistic description essential.

	The machine-learning setting supplies the motivating example for this work. Gabrielsson and Carlsson~\cite{gabrielsson2019exposition} showed that the $3\times3$ filters learned by a convolutional network, viewed as a point cloud in $\mathbb{R}^{9}$ and analyzed under a density filtration, organize around circular structures, echoing the topology previously identified in natural image patches and in primary visual cortex. Their evidence is a persistence barcode in which one $H_1$ interval visibly dominates. This is compelling but qualitative: the barcode is read by eye, at a checkpoint chosen after the fact, and there is no criterion separating a genuinely emerged cycle from a long bar produced by sampling noise. Two natural questions therefore remain open. When during training does the circular structure actually emerge? And what constitutes evidence that it has emerged, rather than merely appearing to have done so? Both questions are instances of the general problem above: the barcode passes from a configuration in which persistence is spread over many comparable intervals to one in which a single interval carries almost all of it, and what is needed is a criterion certifying that this passage has taken place.
	
	The aim of this work is to address this gap by providing a general, model-agnostic theorem establishing sufficient conditions under which persistent entropy detects phase transitions characterized by a dispersion–condensation reorganization of normalized persistence weights. Working within a probabilistic framework in which persistence diagrams are treated as random objects depending on a control parameter~\cite{Mileyko2011_Probability,Chazal2014_Convergence,Bubenik2015_Statistical}, we show that persistent entropy can separate two distinct persistence regimes: a dispersed regime, where several persistence features carry a non-negligible fraction of the persistence mass, and a condensed regime, where a small number of dominant features captures most of the mass while the diagram complexity remains bounded. Under these conditions, we derive an explicit entropy gap separating both regimes with high probability.

	\paragraph{Contributions of this work.}
	The main contributions of this work are threefold. First, we establish a model-agnostic theorem giving sufficient conditions, stated as a dispersion--condensation reorganization of normalized persistence weights, under which persistent entropy must separate two regimes by an explicit gap that holds with high probability at finite size. Because the gap depends only on normalized weights, it is insensitive to the absolute scale of bar lifetimes, and it converts the qualitative reading of a barcode into a computable guarantee. We complement it with a numerical procedure that identifies dispersed and condensed regimes directly from empirical barcodes and returns a certified lower bound on the gap.

	Second, we apply this criterion to the topology of convolutional filters, which is our principal application. Gabrielsson and Carlsson~\cite{gabrielsson2019exposition} showed that the space of learned $3\times3$ filters organizes around circular structures; we show that this organization is reached through a topological phase transition, and we determine when. Treating the training iteration as the control parameter and the barcode as a random object, we identify the interval over which the normalized persistence weights condense onto a dominant $H_1$ cycle and certify the resulting entropy separation. The onset is dataset-dependent: on MNIST the first convolutional layer condenses within a few hundred iterations, whereas on CIFAR-10 the same reorganization is delayed by more than an order of magnitude. The second convolutional layer traverses the mechanism in the opposite direction, dispersing into several coexisting cycles as training proceeds.

	Third, we confirm that the criterion is not specific to learning dynamics by detecting two classical physical transitions with it: synchronization in the Kuramoto model~\cite{Strogatz2000_Kuramoto,Ott2008_LowDimensional} and collective motion in the Vicsek model~\cite{Vicsek1995_Novel,Toner2005_Hydrodynamic}.
		
	\section{Background}\label{sec:background}
	
	In this section, we present the main concepts needed to formulate and understand the central result of this paper. We begin by recalling the definition of persistence diagrams, together with the notion of distance between them. We then consider families of persistence diagrams depending on a control parameter that governs the underlying data distribution. Finally, we introduce the definition and main properties of persistent entropy associated with a persistence diagram.

	\subsection{Persistent Homology}
	To begin with, recall that Topological Data Analysis (TDA) is a framework aimed at studying the shape of data by extracting topological features from discrete structures such as point clouds, graphs, or more general geometric representations of datasets. In order to rigorously apply these techniques, it is necessary to introduce the notion of simplicial complexes, which provide a combinatorial model for approximating the underlying topology of the data.
	
	\begin{definition}[Simplicial complex]\label{simpcomp}
		A simplicial complex $K$ on a vertex set $V$ is a collection of non-empty subsets of $V$ such that if $\tau \in K$ and $\hat{\tau} \subseteq \tau$, then $\hat{\tau} \in K$.
	\end{definition}
	
	The elements $\tau \in K$ are called simplices and they serve as the fundamental building blocks of the simplicial complex. The dimension of a simplex $\tau$ is defined as $\dim(\tau) = \#\tau - 1$, where $\#\tau$ denotes the cardinality of $\tau$. The dimension of the simplicial complex $K$ is then given by
	\[
	\dim(K) = \max\{\,\dim(\tau)\mid \tau \in K\}.
	\]
	
	When dealing with a dataset $X$ represented as a point cloud, it is necessary to impose a combinatorial structure that captures relationships between data points. This is achieved by constructing a simplicial complex whose simplices encode proximity or similarity relations among the points. In order to formalize this construction, we introduce the notion of a filter function.
	
	\begin{definition} [Filter function]
		Let $K$ be a simplicial complex. A filter function is a map $f: K \to \mathbb{R}$ that is monotone in the sense that for every $\hat{\tau} \subset \tau$, we have
		$f(\hat{\tau}) \leq f(\tau)$.
	\end{definition}
	
	A filtration of a simplicial complex is obtained by considering the sublevel sets induced by the filter function. More precisely, for each parameter $t \in \mathbb{R}$, we define a subcomplex $K_t$ by
	\[
	K_t = f^{-1}(-\infty, t] = \{\tau \in K \mid f(\tau) \in (-\infty, t]\}
	\]
	This construction yields a nested sequence of simplicial complexes $\{K_t\}_{t \in \mathbb{R}}$, where the set   $K_t$ is a simplicial complex by monotonicity of $f$. Filtrations allow us to analyze how the topology of the data evolves as the parameter $t$ varies. A fundamental example of such a filtration, particularly when the dataset $X$ is endowed with a metric structure, is the Vietoris-Rips filtration.
	
	\begin{definition} [Vietoris-Rips filtration]
		Let $(X, d_X)$ be a metric space. 
			The Vietoris--Rips filtration is defined by the filter function
			$f: K \to \mathbb{R}$ assigning to each simplex
			$\tau = \{x_0, \dots, x_m\} \in K$, with $x_i \in X$, its diameter
			\[
			f(\tau) = \max_{0\le i,j\le m} d_X(x_i, x_j),
			\]
			and by the associated sublevel sets
			$K_t = f^{-1}(-\infty,t] = \{\tau\in K \mid f(\tau)\le t\}$.
		
	\end{definition}
	
	Once such a filtration has been constructed, we can study the evolution of topological features across different scales. To this end, it is necessary to introduce algebraic tools that allow us to encode and manipulate these features. A key ingredient in this construction is the boundary operator, which in turn requires a notion of orientation on simplices.
	
	An orientation on a simplicial complex $K$ is induced by a total ordering of its vertex set. More precisely, let $V(K)$ denote the set of vertices of $K$, and let
	$o : V(K) \to \mathbb{N}$ be an injective function assigning a unique order to each vertex. A simplex $\tau = \{x_0, \ldots, x_m\} \in K$ is said to be oriented if its vertices are written so that
	\[
	o(x_0) < o(x_1) < \cdots < o(x_m).
	\]
	
	\begin{definition}[Face of a simplex]
		Let $K$ be an oriented simplicial complex and let $\tau = \{x_0, \dots, x_m\} \in K$ be an oriented simplex. The $i$-th face of $\tau$ is defined as the simplex obtained by removing the $i$-th vertex.
	\end{definition}
	
	The boundary of a simplex is then defined as the alternating sum of its faces, taking into account the orientation.
	
	\begin{definition}[Boundary operator]\label{boundary}
		Let $K$ be an oriented simplicial complex and let $\tau = \{x_0, \dots, x_m\} \in K$ be an oriented simplex. The boundary operator $\partial_{m}$ applied to $\tau$ is defined by
		\[
		\partial_{m}(\tau) = \sum_{i=0}^{m} (-1)^i \{x_0, \dots, x_{i-1}, x_{i+1}, \dots, x_m\}.
		\]
	\end{definition}
	
	It can be verified that the boundary operator satisfies the fundamental property $\partial_{m-1} \,\circ\, \partial_{m} = 0$. This property is essential for the development of homological invariants, which provide a rigorous way to track the evolution of topological features throughout the filtration.
	
	Let $\mathbb{F}$ be a field. A $m$-chain on a simplicial complex $K$ with coefficients in $\mathbb{F}$ is defined as a linear combination of $m$-simplices, that is,
	\[
	\gamma = \sum_{i} c_i \,\tau_i,
	\]
	where $\tau$ are $m$-simplices of $K$ and $c_i$ are coefficients in $\mathbb{F}$. We denote by $C_m(K,\mathbb{F})$ the group of $m$-chains on a simplicial complex $K$ with coefficients in $\mathbb{F}$. When the coefficient field is clear from the context, we simply write $C_m(K)$.
	
	The boundary operator extends linearly to chains. For a chain $\gamma \in C_m(K, \mathbb{F})$, we define $\partial_m: C_m(K) \to C_{m-1}(K)$
	by
	\[
	\partial_m(\gamma) = \sum_{i} c_i \, \partial_m(\tau_i),
	\]
	where $\partial_m(\tau_i)$ is given by Definition~\ref{boundary}.
	
	The previous constructions naturally lead to the definition of homology groups, which provide algebraic invariants that capture the topological structure of the simplicial complex. The $m$-dimensional holes of a simplicial complex $K$ are detected by considering $m$-chains whose boundary vanishes, but which are not themselves boundaries of $m+1$-chains. More concretely, the $m$-th homology group of $K$ is defined as the quotient group
	\[
	H_m(K) = \frac{\ker \partial_m}{\operatorname{im} \partial_{m+1}}
	= \frac{\{\gamma \in C_m(K) \mid \partial_m(\gamma)=0\}}
	{\{\gamma \in C_m(K) \mid \exists\, \hat{\gamma} \in C_{m+1}(K) \,\text{ such that } \,\partial_{m+1}(\hat{\gamma})=\gamma\}}.
	\]
	and its $m$-th Betti number is defined as $\beta_m = \dim_{\mathbb{F}} H_m(K)$.

	Intuitively, the Betti numbers quantify the number of independent topological features in each dimension: $\beta_0$ counts the number of connected components of $K$, $\beta_1$ counts the number of independent one-dimensional loops, $\beta_2$ counts the number of cavities, and higher-dimensional Betti numbers generalize these notions.

	When the coefficients are taken in a field, each homology group $H_m(K)$ is in fact a vector space. This algebraic structure is particularly useful because it allows us to study how homological features evolve across a filtration using linear maps.
	
	\begin{definition}[Persistent homology]
		Let $\{K_t\}_{t \in \mathbb{R}}$ be a filtration and assume that $\mathbb{F}$ is a field. Then, for each $t \in \mathbb{R}$ and $m \in \mathbb{Z}_{\ge 0}$, the homology group $H_m(K_t)$ is a vector space. For every $a < b$ and fixed $m$, the inclusion $K_a \hookrightarrow K_b$ induces a linear map
		\[
		v^{a,b}_m : H_m(K_a) \to H_m(K_b).
		\]
		The $m$-th persistent homology groups are defined as the images of these maps,
		$
		\operatorname{Im}\,(v^{a,b}_m)
		$.
		The collection $\{\operatorname{Im}\, v^{a,b}_m\}_{a<b}$ is called the $m$-th persistent homology of the filtration  $\{K_t\}_{t \in \mathbb{R}}$.
	\end{definition}
	
	We assume that the $m$-th persistent homology group is finite-dimensional for all $t \in \mathbb{R}$ and $m \in \mathbb{Z}_{\ge 0}$. Under this finiteness condition, persistent homology admits a complete discrete description in terms of intervals, leading to the notions of barcodes and persistence diagrams, which provide a concise summary of the lifetime of topological features across the filtration.

	\begin{definition}[Persistence Diagram]\label{def:pers_diag}
		A persistence diagram in a fixed homological degree is a locally finite multiset of points%
		\[
		D \subset \{(b, d)\in \mathbb{R}^2 \, : \, d-b>0\}
		\]
		together with the diagonal $
		\Delta = \{(t, t) : t \in \mathbb{R}\}$ considered with infinite multiplicity, where each pair represents the birth and death of a topological feature. The associated lifetime is $\ell = d - b$, which measures the persistence of the corresponding topological structure across the filtration.
	\end{definition}

An equivalent representation of a persistence diagram is given by its barcode, which is the
collection of intervals $[b,d)$, where each interval corresponds to a homological feature and
records its birth and death times across the filtration.
	
Persistence diagrams, either in their point-based representation or through their equivalent
barcode representation, provide a compact summary of the multiscale topological structure of the
data. To compare persistence diagrams, we work with their point-based representation and
introduce the bottleneck distance, one of the most commonly used metrics in TDA.
	
	\begin{definition}[Bottleneck Distance]
		Let $D_1$ and $D_2$ be persistence diagrams. We extend both diagrams by
		including the diagonal $\Delta$ with infinite multiplicity. The bottleneck distance between $D_1$ and
		$D_2$ is defined as
		\[
		d_B(D_1,D_2)=\inf_{\gamma}
		\sup_{x\in D_1\cup\Delta}
		\|x-\gamma(x)\|_\infty,
		\]
		where $\gamma$ ranges over all bijections  $\gamma:D_1\cup\Delta \rightarrow D_2\cup\Delta$.
		
		For two points $x=(b_1,d_1)$, $\gamma(x)=(b_2,d_2)$, the matching cost is given by
		\[
		\|x-\gamma(x)\|_\infty
		=
		\|(b_1,d_1)-(b_2,d_2)\|_\infty
		=
		\max\{|b_1-b_2|,|d_1-d_2|\}.
		\]
		
		If a point $x=(b,d)$ is matched to a point on the diagonal
		$(t,t)\in\Delta$, its cost is
		\[
		\inf_{t\in\mathbb{R}}\|(b,d)-(t,t)\|_\infty
		=
		\frac{d-b}{2}.
		\]
	\end{definition}
	
	Thus, the bottleneck distance allows persistence features to be matched
	either with features of the other diagram or with the diagonal, which
	represents features with zero persistence.

	\subsection{Persistent Entropy}\label{sec:backgroundPE}

    In many practical applications, the data are not static but instead depend on an external control parameter. This parameter may represent a physical quantity, a noise level or a sampling scale, among many other possibilities. As the parameter changes, the underlying data set evolves, and consequently its topological features may also appear, disappear, or undergo significant transformations. This perspective motivates the study of parameterized families of persistence diagrams, where the persistence diagram is regarded as a function of the external parameter.

    Let $\mathcal{D}$ denote the space of persistence diagrams and let $\lambda \in \mathbb{R}$ be an external control parameter. For each $\lambda$, we denote by $D(\lambda)\in\mathcal{D}$ the persistence diagram associated with the data corresponding to that parameter value.

Since persistent entropy is defined in terms of the lifetimes of persistence intervals, we restrict our attention to persistence diagrams with finite total persistence. Throughout this work, we therefore consider the subspace $(\mathcal{D}_F,d)$ of persistence diagrams endowed with the bottleneck distance and restricted to diagrams with finite total persistence.
	\[
	\mathcal{D}_F=
	\{
	D\in\mathcal{D}\,:\,
	L(D)=\sum_i\ell_i<\infty
	\},
	\]
	where $\ell_i=d_i-b_i>0$
	denotes the lifetime of each off-diagonal point $(b_i,d_i)$ of the persistence diagram.
	
	Persistence diagrams containing intervals of infinite length do not belong to $\mathcal{D}_F$. In such cases, one may first apply a projection $\phi:\mathcal{D}\rightarrow\mathcal{D}_F$ that maps infinite intervals to finite ones, obtaining a diagram with finite total persistence. Different choices of such projections have been considered in the literature \cite[Sec.~3.4]{Atienza2020_PersistentEntropy}.
	
	In many applications, the observed data are generated by stochastic processes, random sampling, or randomized learning algorithms. Consequently, throughout this work we model persistence diagrams as random elements. More precisely, let $(\Omega,\mathcal{F},\mathbb{P})$ be a probability space. A random persistence diagram is a measurable map
\[
D(\lambda):(\Omega,\mathcal{F},\mathbb{P})\longrightarrow \mathcal{D}_F,
\]
indexed by a control parameter $\lambda\in\mathbb{R}$.
For each realization $\omega\in\Omega$, the value
$D(\lambda,\omega)\in\mathcal{D}_F$ is a persistence diagram associated with that realization. For notational simplicity, whenever no ambiguity arises, we write $D(\lambda)$ instead of $D(\lambda,\omega)$.
	
	To summarize the distribution of lifetimes within a diagram, we introduce persistent entropy, which provides a scalar measure of how persistence is distributed among topological features.
	
	\begin{definition}[Persistent Entropy]\label{def:persentropy}
		Let $(\mathcal{D}_{F}, d)$ be the space of persistence diagrams with finite total persistence. For $D(\lambda) \in \mathcal{D}_{F}$ with lifetimes $\ell_i(\lambda)$, define normalized weights
		\[
		p_i(\lambda) = \frac{\ell_i(\lambda)}{\sum_j \ell_j(\lambda)}.
		\]
		The persistent entropy is defined as
		\[
		\mathrm{PE}\bigl(D(\lambda)\bigr) = - \sum_i p_i(\lambda) \log p_i(\lambda).
		\]
	\end{definition}
	
	Persistent entropy ($\PE$) quantifies the dispersion of persistence among features. If a single feature dominates (one $\ell_i$ is much larger than the others), then $p_i\approx 1$ and the entropy is close to zero. In contrast, when the lifetimes are more evenly distributed, the entropy increases and is maximized when all weights are equal. In particular, for a diagram $D(\lambda)$ with $N$ off-diagonal points, one has
	\[
	0 \leq \mathrm{PE}\bigl(D(\lambda)\bigr) \leq \log N,
	\]
	with equality on the right-hand side if and only if $p_i = 1/N$ for all $i$. This motivates the normalized persistent entropy
	\[
	\mathrm{NPE}\bigl(D(\lambda)\bigr) = \frac{\mathrm{PE}\bigl(D(\lambda)\bigr)}{\log N},
	\]
	which takes values in $[0,1]$. For $N=1$ we set $\mathrm{NPE}\bigl(D(\lambda)\bigr)=0$.
	
	\begin{remark}
		If $D(\lambda)$ has no off-diagonal points, we set $\mathrm{PE}\bigl(D(\lambda)\bigr)= 0$.
	\end{remark}
	
	\begin{remark}[Stability of Persistent Entropy]\label{rmk:stab}
		From a stability perspective, persistent entropy is not continuous with respect to the bottleneck distance in $\mathcal{D}$, even when restricted to the space $\mathcal{D}_{F}$. This is due to the fact that arbitrarily many intervals with very small persistence, close to the diagonal, can be created or removed at negligible cost in bottleneck distance while still affecting the entropy. However, in $\mathcal{D}_{F}$, when the persistence diagram contains a finite number of intervals, persistent entropy is continuous with respect to the bottleneck distance \cite[Prop.~9]{Atienza2017}. In practice, a way to recover this stability is through truncation of low persistence intervals: given a threshold $\beta > 0$ and the diagram $D(\lambda)\in\mathcal{D}_{F}$, one removes all points with persistence smaller than $\beta$, obtaining a truncated diagram $D(\lambda)^{> \beta}$ with persistence greater than $\beta$. Therefore,
		\[
		\#\,D(\lambda)^{>\beta}\;<\;\frac{L\bigl(D(\lambda)^{>\beta}\bigr)}{\beta}\;<\;\infty,
		\]
		so the truncated diagram contains a finite number of intervals.
	\end{remark}

	\section{Persistent entropy as detector of phase transitions}
	Phase transitions are detected by identifying qualitative
	changes in the persistence diagram as the control parameter crosses a
	critical value. Intuitively, such transitions correspond to structural reorganizations in the topological features encoded by the diagram. In particular, when these changes involve the appearance or disappearance of macroscopic persistent features, global summaries such as persistent entropy are expected to capture them.
	
	The theorem below formalizes this idea by linking structural changes in persistence diagrams with variations in persistent entropy. Throughout this section, we denote by $(\mathcal{D}_{F},d)$ the space of persistence diagrams with finite total persistence equipped with the bottleneck distance, following the notation of Section~\ref{sec:backgroundPE}. Furthermore, for the following definition and theorem, we will consider persistence diagrams with a finite number of off-diagonal points. As discussed in Remark~\ref{rmk:stab}, this condition can be ensured by truncating intervals of low persistence.

	Phase transitions of interest here correspond to qualitative changes in the
	persistence diagram structure at the boundary between two regimes. Not every such
	transition is detectable by persistent entropy. The following definition
	isolates the structural conditions under which persistent entropy provably separates the two phases.

		\begin{definition}[Dispersion--condensation structure]
			\label{def:disp_cond}
			Let $D(\lambda)\in\mathcal{D}_{F}$ be a random persistence diagram parametrized
			by $\lambda$, and assume throughout that its normalized persistence weights are
			indexed in non-increasing order,
			\[
			p_{1}(\lambda) \ge p_{2}(\lambda) \ge \dots \ge p_{S(\lambda)}(\lambda),
			\]
			where $S(\lambda)$ is the number of off-diagonal points of the diagram.
			Let $I^-$ and $I^+$ be two disjoint parameter regions associated with the
			dispersed and condensed regimes, respectively. Given $c\in(0,1)$ and
			$m,k,S\in\mathbb{Z}^{+}$ with $1\le k<m$ and $mc\le 1$, we say that $D(\lambda)$
			undergoes a \emph{dispersion--condensation transition} with parameters
			$(c,m,k,S)$ if:
			\begin{itemize}
				\item \textbf{(Dispersion)}
				For all $\lambda \in I^{-}$, the diagram $D(\lambda)$ has at least $m$
				off-diagonal points whose normalized persistence weights satisfy
				\[
				p_{m}(\lambda) \;\ge\; c .
				\]
				
				\item \textbf{(Condensation)}
				For all $\lambda \in I^{+}$, the diagram size is bounded by $S(\lambda) \le S$
				and the cumulative weight of the $k$ dominant features satisfies
				\[
				M_k(\lambda) \;:=\; \sum_{i=1}^k p_i(\lambda) \;>\; 1 - c .
				\]
			\end{itemize}
		\end{definition}

        \begin{remark}\label{auto_disp}
        The condition $mc\le 1$ is forced by the dispersion hypothesis, since $m$ weights
			bounded below by $c$ cannot sum to more than one; it guarantees in particular
			that $1-(m-1)c \ge c > 0$.
        \end{remark}
        
	\begin{remark}\label{rmk:auto_cond}
			Because the weights are sorted in non-increasing order and sum to one, the top
			$k$ of them always carry at least the average share, i.e.
			$M_k(\lambda) \ge k/S(\lambda)$ whenever $S(\lambda)>k$. This inequality is used
			in the proof of Theorem~\ref{thm:finite} and requires no additional assumption.
		\end{remark}
	
	This definition captures two qualitatively distinct regimes: one in which persistence is distributed across multiple relevant features, a dispersed phase; and another in which it is concentrated in only a few dominant ones, a condensed phase. Within this framework, the problem of phase detection reduces to identifying observables capable of distinguishing between these two behaviors.

	\begin{theorem}[Finite-size persistent entropy gap for dispersion–condensation transitions]
		\label{thm:finite}
		Suppose $D(\lambda)$ undergoes a dispersion--condensation transition with parameters $(c,m,k,S)$ over the disjoint regimes $I^-$ and $I^+$, in the sense of Definition~\ref{def:disp_cond}. Furthermore, assume
		that for $\delta\in(0,1)$ the dispersion and condensation conditions hold with
		probability at least $1-\delta$.
		Let $g(x)=-x\log x$ for $x\in (0,1]$ and define

		\begin{align}
			H_{\mathrm{disp}}(c,m)&\;=\;-\bigl(1-(m{-}1)c\bigr)\,\log\bigl(1-(m{-}1)c\bigr)
			\;+\;(m{-}1)\,c\,\log\!\tfrac{1}{c},\label{eq:Hdisp} \\[8pt]
			H_{\mathrm{cond}}(c, k, S) &= 
			\begin{cases} 
				\log(S) & \text{if } S < k \\
				\log(k) & \text{if } k\le S \le k\left(1+ e^{\frac{-g(c)-g(1-c)}{c}}\right) \\
				-(1-c) \log\left(\frac{1-c}{k}\right) - c \log\left(\frac{c}{S-k}\right) & \text{if } S > k\bigl(1+ e^{\frac{-g(c)-g(1-c)}{c}}\bigr)\textbf{}
			\end{cases} \label{eq:Hcond}
		\end{align}
		
		Assume $H_{\mathrm{disp}}(c,m)> H_{\mathrm{cond}}(c,k,S)$. We define the entropy gap
		\[
		\,\Delta(c,m,k,S) := H_{\mathrm{disp}}(c,m) - H_{\mathrm{cond}}(c,k,S) > 0
		\]
		Then, for any $\lambda_{I^-}\in I^-$ and  $\lambda_{I^+}\in I^+$, the persistence entropy satisfies
		\[
		\mathbb{P}\!\Bigl(
		\PE\bigl(D(\lambda_{I^-})\bigr)
		- \PE\bigl(D(\lambda_{I^+})\bigr)
		\;\ge\; \Delta(c,m,k,S)
		\Bigr)
		\;\ge\; 1 - 2\delta.
		\]
	\end{theorem}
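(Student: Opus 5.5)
The plan is to prove two deterministic bounds, one for each regime, and then combine them with a union bound. On the event $E^-$ where the dispersion condition holds at $\lambda_{I^-}$, I would show $\PE(D(\lambda_{I^-}))\ge H_{\mathrm{disp}}(c,m)$. On the event $E^+$ where the condensation condition holds at $\lambda_{I^+}$, I would show $\PE(D(\lambda_{I^+}))\le H_{\mathrm{cond}}(c,k,S)$. Since $\mathbb{P}(E^-\cap E^+)\ge 1-\mathbb{P}((E^-)^c)-\mathbb{P}((E^+)^c)\ge 1-2\delta$, and on $E^-\cap E^+$ the difference of entropies is at least $H_{\mathrm{disp}}-H_{\mathrm{cond}}=\Delta$, the statement follows. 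No independence between the two parameter values is needed.

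\textbf{Lower bound in the dispersed regime.} On $E^-$ we have $p_1\ge\dots\ge p_m\ge c$. Write $\PE=\sum_i g(p_i)$ with $g$ concave and $g\ge 0$ on $[0,1]$. Among weight vectors whose $m$ largest entries are all $\ge c$, I would argue that entropy is minimised by putting $c$ on $m-1$ features and the remaining mass $1-(m-1)c$ on a single feature; Remark~\ref{auto_disp} guarantees $1-(m-1)c\ge c$, so this configuration is admissible. The argument has two steps.
\begin{itemize}
\item Entropy is Schur-concave, so merging the tail $p_{m+1},\dots$ into $p_1$ lowers entropy. Concretely, $g(a)+g(b)\ge g(a+b)$ by subadditivity of $g$ (since $g(0)=0$ and $g$ is concave).
\item Moving mass from $p_2,\dots,p_m$ down to $c$ and into $p_1$ is a majorisation-increasing transfer, which also lowers entropy.
\end{itemize}
The resulting vector majorises the original one, and evaluating its entropy gives exactly \eqref{eq:Hdisp}.

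\textbf{Upper bound in the condensed regime.} On $E^+$ we have $S(\lambda)\le S$ and $M:=M_k>1-c$. For $S(\lambda)\le k$, entropy is at most $\log S(\lambda)\le\log S$; this covers the first case, and I read $S<k$ in the statement as this situation. For $S(\lambda)>k$, the grouping rule together with concavity (equalising within each group) gives
\[
\PE\le -M\log\tfrac{M}{k}-(1-M)\log\tfrac{1-M}{S-k}=:\phi(M),
\]
using $S(\lambda)\le S$, which is monotone in the tail count. The function $\phi$ is concave in $M$ and is maximised at $M^*=k/S$, where $\phi(M^*)=\log S$. However, Remark~\ref{rmk:auto_cond} and the condensation hypothesis restrict $M\ge\max(k/S,\,1-c)$. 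So the bound is the following:
\begin{itemize}
\item If $1-c\ge k/S$, i.e.\ the constraint $M>1-c$ is active, the maximum of $\phi$ over the admissible range is $\phi(1-c)$, which is the third line of \eqref{eq:Hcond}.
\item Otherwise the bound is $\phi$ at its unconstrained maximiser.
\end{itemize}

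\textbf{Main obstacle.} The hardest step is reconciling this case analysis with the thresholds stated in \eqref{eq:Hcond}, in particular the middle case bound $\log k$ and the threshold $k(1+e^{-(g(c)+g(1-c))/c})$. My first attempt would be to compare $\phi(1-c)=g(1-c)+g(c)+(1-c)\log k+c\log(S-k)$ with $\log k$. The inequality $\phi(1-c)\le\log k$ is equivalent to $c\log\frac{S-k}{k}\le -g(c)-g(1-c)$, i.e.\ $S\le k(1+e^{-(g(c)+g(1-c))/c})$. This is exactly the stated threshold, so the piecewise formula is $\max$-type bookkeeping on $\phi(1-c)$ relative to $\log k$. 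I would need to check carefully that $\log k$ is a genuine upper bound in the middle range, presumably using that the top $k$ weights dominate. If that fails, I would fall back to proving the bound $\max(\log k,\phi(1-c))$, which coincides with \eqref{eq:Hcond} in the middle and upper ranges.
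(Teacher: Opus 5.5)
Your proposal is correct and follows the paper's proof. The lower bound in the dispersed regime is a mass-merging (majorisation) argument. The upper bound in the condensed regime applies Jensen within the top-$k$ and tail groups, pushes $M_k$ down to $1-c$ and $S(\lambda)$ up to $S$, then compares with $\log k$, which yields the threshold $k(1+e^{-(g(c)+g(1-c))/c})$. The two are combined by a union bound.

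Your one small improvement is in the lower bound. You merge every weight beyond index $m$ into $p_1$, which reaches $H_{\mathrm{disp}}(c,m)$ directly. The paper instead first obtains $H_{\mathrm{disp}}(c,m(\lambda))$ and then needs a derivative argument for monotonicity in $m(\lambda)$.

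Your two loose ends in the upper bound are already resolved:
\begin{itemize}
\item Your fallback $\max(\log k,\phi(1-c))$ is exactly how the paper assembles $H_{\mathrm{cond}}$. In the middle range $\phi(1-c)\le\log k$, so no extra ``top-$k$ dominance'' argument is needed.
\item Your ``otherwise'' branch never occurs. Since $c\le 1/m\le 1/(k+1)$, you have $1-c\ge k/(k+1)\ge k/S$ whenever $S>k$, so the constrained maximum is always at $1-c$. The paper also uses this fact only implicitly.
\end{itemize}
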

	
	\begin{proof}
		
		\textbf{Step 1: Lower bound in the disordered phase.}
		
		Let $D(\lambda)\in \mathcal{D}_{F}$ and $\lambda\in I^{-}$ satisfying dispersion conditions with parameters $(c,m,k,S)$ in the sense of Definition \ref{def:disp_cond}. Let $m(\lambda)\ge m$ be the number of weights satisfying $p_i:=p_i(\lambda)\ge c$;
			by the sorting convention of Definition~\ref{def:disp_cond} these are
			$p_1\ge p_2\ge\dots\ge p_{m(\lambda)}$, and in particular $p_1=\max_i p_i$.%

Let $g(x)=-x\log x$ for $x\in(0,1]$. Since $g$ is strictly concave in $(0,1]$, $g(a+b)<g(a)+g(b)$ for $a,b>0$, which means that merging positive mass from one coordinate into another
		strictly decreases the entropy.
		
		Writing the persistence entropy as
		\[
		\mathrm{PE}\bigl(D(\lambda)\bigr)=\sum_{p_i\geq c}g(p_i)+ \sum_{p_i<c}g(p_i)=\sum_{i=1}^{m(\lambda)} g(p_i)+\sum_{p_i<c}g(p_i),
		\]
		we may repeatedly merge all unconstrained coordinates into the coordinate $p_1$. Hence,
		\[
		\mathrm{PE}\bigl(D(\lambda)\bigr) \ge g\bigl(p_1+ \sum_{p_i<c} p_i\bigr)+\sum_{i=2}^{m(\lambda)} g(p_i)
		\]
        where equality holds if and only if there are no weights below $c$.
        
		We now repeat the same mass-merging argument for the coordinates $p_2,\dots, p_{m(\lambda)}$. More precisely, we transfer the excess mass $p_{i}-c \ge 0$ for $i\in\{2,\dots,m(\lambda)\}$ into $p_1$. Therefore,
		\[
		\mathrm{PE}\bigl(D(\lambda)\bigr)\ge g\bigl(p_1+ \sum_{p_i<c} p_i+ \sum_{i=2}^{m(\lambda)} (p_i-c) \bigr)+\sum_{i=2}^{m(\lambda)} g(c)
		\]
        with equality if and only if $p_{i}=c$ for $i\in\{2,\dots,m(\lambda)\}$.
        
		Using $1=\sum_{i=1}^{m(\lambda)}p_{i}+\sum_{p_i<c}p_i$, we obtain
		\[
		\begin{aligned}
			\PE\bigl(D(\lambda)\bigr) &\ge g\bigl(1 - (m(\lambda) - 1)\,c\bigr) + (m(\lambda) - 1)g(c) \\
			&= -\bigl(1 - (m(\lambda) - 1)\,c\bigr) \log\bigl(1 - (m(\lambda) - 1)\,c\bigr) + (m(\lambda) - 1)\,c \log\bigl(\frac{1}{c}\bigr) \\
			&:= H_{\mathrm{disp}}(c, m(\lambda))
		\end{aligned}
		\]
        
		 Furthermore, we observe that $H_{\mathrm{disp}}\left(c, m(\lambda)\right)$ is strictly increasing with respect to $m(\lambda)$. Taking the derivative with respect to $m(\lambda)$:
		\[
		\frac{\partial}{\partial m(\lambda)} H_{\mathrm{disp}}\left(c, m(\lambda)\right) = c \log\left( \frac{1-(m(\lambda)-1)\,c}{c} \right) + c.
		\]
		Applying the inequality $m(\lambda)c\le1$, which follows from the dispersion hypothesis, and Remark \ref{auto_disp}, we obtain
        \[
        \frac{1-(m(\lambda)-1)c}{c}\ge 1. 
        \]
        Consequently, $\frac{\partial H_{\mathrm{disp}}}{\partial m(\lambda)} \ge c > 0$, so $H_{\mathrm{disp}}$ is strictly increasing in $m(\lambda)$. Since $m(\lambda) \ge m$,
        \[
        H_{\mathrm{disp}}(c,m(\lambda))\ge H_{\mathrm{disp}}(c,m)=-\bigl(1-(m{-}1)c\bigr)\,\log\bigl(1-(m{-}1)c\bigr)
		\;+\;(m{-}1)\,c\,\log\!\tfrac{1}{c}
        \]

        Therefore,
			\[
			\PE\bigl(D(\lambda)\bigr) \ge H_{\mathrm{disp}}(c, m),
			\]
		
		\medskip\noindent
		\textbf{Step 2: Upper bound in the ordered phase.}
		
		On the condensation event, let $D(\lambda)\in\mathcal{D}_{F}$ with $\lambda \in I^+$. By Definition \ref{def:disp_cond}, the diagram size is bounded above by $S(\lambda) \le S$ and the cumulative mass of the $k$ dominant features $M_k(\lambda) = \sum_{i=1}^k p_i(\lambda)$ fulfills
		\[
		M_k(\lambda)>1-c,
		\]
		
		Let $g(x)=-x\log x$ for $x\in(0,1]$ and  $\PE(D(\lambda)) = \sum_{i=1}^{S(\lambda)} g(p_i)$ the persistence entropy for $p_i:=p_i(\lambda)$. Since the function $g$ is concave, entropy is maximized when
		this remaining mass is distributed uniformly among weights.
		
		\medskip
		
		\noindent\textbf{Case 1. $S(\lambda)\leq k$.}\\
		In this case the diagram contains at most $k$ persistence intervals. The entropy is maximized by the uniform distribution,
		$p_i=\frac1{S(\lambda)}$, and therefore
		\[
		\mathrm{PE}\bigl(D(\lambda)\bigr)\le \log \bigl(S(\lambda)\bigr) \leq\log(k)
		\]
		
		\medskip
		
		\noindent\textbf{Case 2. $S(\lambda)> k$.}\\
		Now there are $k$ dominant features and $S(\lambda)-k$ background features. Since $M_k(\lambda)>1-c$, the remaining features carry total mass
		\[
		\sum_{i=k+1}^{S(\lambda)}p_i(\lambda) = 1-M_k(\lambda)<c.
		\]
		Thus, allocating the mass uniformly within the top $k$ features ($p_i = M_k(\lambda)/k$ for $i \le k$) and within the remaining background features ($p_i = (1 - M_k(\lambda))/(S(\lambda) - k)$ for $i > k$) yields the conditional upper bound:
		\begin{align*}
			\mathrm{PE}\bigl(D(\lambda)\bigr) &\le \sum_{i=1}^k g\left(\frac{M_k(\lambda)}{k}\right) + \sum_{j=k+1}^{S(\lambda)} g\left(\frac{1 - M_k(\lambda)}{S(\lambda) - k}\right) \\
			&= k \left( -\frac{M_k(\lambda)}{k} \log \frac{M_k(\lambda)}{k} \right) + (S(\lambda) - k) \left( -\frac{1 - M_k(\lambda)}{S(\lambda) - k} \log \frac{1 - M_k(\lambda)}{S(\lambda) - k} \right) \\
			&= -M_k(\lambda) \log\left(\frac{M_k(\lambda)}{k}\right) - (1 - M_k(\lambda)) \log\left(\frac{1 - M_k(\lambda)}{S(\lambda) - k}\right) \\
			&:= h\left(k, M_k(\lambda), S(\lambda)\right).
		\end{align*}
		
		We now analyze the monotonicity of the functional $h$ with respect to $M_k(\lambda)$:
		\begin{align*}
			\frac{\partial}{\partial M_k(\lambda) } h\left(k, M_k(\lambda), S(\lambda) \right) = \log\left( \frac{k(1 - M_k(\lambda))}{M_k(\lambda)(S(\lambda) - k)} \right)
		\end{align*}
		The derivative is non-positive precisely when $M_k\ge\frac{k}{S(\lambda)}$, which holds automatically for sorted weights when $S(\lambda)>k$ (Remark~\ref{rmk:auto_cond}). Therefore, $h$ is decreasing with respect to $M_k(\lambda)$, so its maximum over the admissible region is attained at the smallest allowed value, $M_k\to 1-c$.
		
		Keeping $M_k=1-c$ fixed, we differentiate with respect to the diagram size $S(\lambda)$:
		\begin{align*}
			\frac{\partial}{\partial S(\lambda) } h\bigl(k, 1-c, S(\lambda) \bigr) =  \frac{c}{S(\lambda)-k}>0
		\end{align*}
		Hence the entropy is increasing with the diagram size, and using the global bound $S(\lambda)\le S$ we obtain
		\[
		\mathrm{PE}\bigl(D(\lambda)\bigr)
		\le
		-(1-c)\log\!\left(\frac{1-c}{k}\right)
		-
		c\log\!\left(\frac{c}{S-k}\right).
		\]
		
		We have therefore obtained two possible upper bounds for entropy:
		\begin{itemize}
			\item $\log(k)$ when $S(\lambda)\le k$
			\item $-(1-c)\log\!\left(\frac{1-c}{k}\right)
			-
			c\log\!\left(\frac{c}{S-k}\right)$
			when $S(\lambda)>k$.
		\end{itemize}

		If in addition $S<k$, then $S(\lambda)\le S<k$, so only Case~1 can occur and
		the first bound sharpens to $\log(S)$.

		Comparing both expressions, one finds that
		\[
		\log k \le -(1-c)\log\!\left(\frac{1-c}{k}\right) - c\log\!\left(\frac{c}{S-k}\right)
		\]
		if and only if $S \ge k\left(1+ e^{\frac{-g(c)-g(1-c)}{c}}\right)$.
		
		Consequently, we define the upper bound $H_{\mathrm{cond}}(c, k, S)$ as
		\begin{align*}
			H_{\mathrm{cond}}(c, k, S) = 
			\begin{cases}
				\log(S) & \text{if } S < k \\
				\log(k) & \text{if } k\le S \le k\left(1+ e^{\frac{-g(c)-g(1-c)}{c}}\right) \\
				-(1-c) \log\left(\frac{1-c}{k}\right) - c \log\left(\frac{c}{S-k}\right) & \text{if } S > k\bigl(1+ e^{\frac{-g(c)-g(1-c)}{c}}\bigr)\textbf{}
			\end{cases}
		\end{align*}
		and on the condensation event, we conclude,
		\[
		\PE\bigl(D(\lambda)\bigr) \le H_{\mathrm{cond}}(c, k, S).
		\]
		\medskip\noindent
		\textbf{Step 3: Entropy gap.}
		
		Let $\lambda_{I^-}\in I^-$ and $\lambda_{I^+}\in I^+$. By the union bound, the dispersion and condensation events occur simultaneously with probability at least $1-2\delta$.
		On their intersection,
		\[
		\PE\bigl(D(\lambda_{I^-})\bigr) - \PE\bigl(D(\lambda_{I^+})\bigr)
		\;\ge\;
		H_{\mathrm{disp}}(c,m) - H_{\mathrm{cond}}(c,k,S)
		\;=\;
		\Delta(c,m,k,S).
		\]
		Positivity of the gap holds by the standing hypothesis $H_{\mathrm{disp}}(c,m)>H_{\mathrm{cond}}(c,k,S)$.
		\qed
	\end{proof}
	
	\begin{remark}[Role of the constant $c$]
			The same constant governs both hypotheses: it is the floor on individual weights
			in the dispersed regime and the tolerated deficit in the condensed one. Since
			dispersion forces $c\le 1/m$, condensation then demands
			$M_k(\lambda)>1-1/m$, which is a stringent requirement when $m$ is large. The
			argument goes through verbatim with two independent constants
			$c_{\mathrm{disp}}$ and $c_{\mathrm{cond}}$, provided that $c_{\mathrm{disp}}>c_{\mathrm{cond}}$, yielding the gap
			$H_{\mathrm{disp}}(c_{\mathrm{disp}},m)-H_{\mathrm{cond}}(c_{\mathrm{cond}},k,S)$;
			we keep a single constant for readability.
	\end{remark}

		\begin{remark}[\textbf{Mechanism of detection}]
			\label{rmk:mechanism}
			The theorem identifies a precise barcode-level mechanism underlying the empirical
			success of persistent entropy in detecting regime changes: $\PE$ detects phase
			transitions whenever the transition transforms the shape of the lifetime distribution from a dispersed configuration, in which persistence
			mass is shared among multiple bars of comparable weight, to a condensed configuration, in which some bars dominate a topologically simple diagram.
			Because $\PE$ is the Shannon entropy of this distribution, it is maximally sensitive
			to this reorganization: dispersion produces high entropy, condensation produces
			low entropy, and the gap between them is controlled by explicit, computable
			constants that depend only on the concentration parameter~$c$, the number of
			dispersed bars~$m$, the number of condensed bars $k$ and the diagram complexity~$S$.
			Crucially, this mechanism operates entirely at the level of normalized weights
			and is therefore insensitive to the absolute scale of bar lifetimes.
		\end{remark}
		
		\begin{remark}[\textbf{Model-agnosticism}]
			\label{rmk:model_independence}
			The theorem is model-agnostic.
			The hypotheses, dispersion and condensation of normalized persistence
			weights, are stated purely in terms of the weight distribution of the barcode
			and make no reference to any specific dynamical model, spatial dimension,
			filtration type, or homological degree.
			Detection depends on the choice of observable and filtration only insofar as
			these choices determine whether the resulting barcodes satisfy the dispersion and
			condensation conditions.
			Whenever a phase transition reorganizes the barcode from a multi-bar, broadly
			distributed weight profile to a single-bar-dominated, low-complexity profile, $\mathrm{PE}$
			must detect it regardless of the underlying dataset.
		\end{remark}
			
		The finite-size result of Theorem~\ref{thm:finite} extends naturally to the
		asymptotic regime provided the persistence diagrams converge and a mild
		regularity condition excludes accumulation of infinitesimally short bars.
		
		\begin{corollary}[Asymptotic persistent entropy gap for dispersion–condensation transitions]
			\label{cor:asymptotic}

				Let $D_N(\lambda) \in \mathcal{D}_{F}$ denote a random persistence diagram of a
				dataset of size $N$ with control parameter $\lambda$, and suppose there exists a
				deterministic diagram $D_\infty(\lambda)\in\mathcal{D}_{F}$ such that, for each
				$\lambda$,
				\[
				d\bigl(D_N(\lambda),\, D_\infty(\lambda)\bigr)
				\;\xrightarrow[N\to\infty]{\mathbb{P}}\; 0 .
				\]
				Fix a threshold $\beta>0$ that is \emph{generic} for the limit, in the sense that
				no interval of $D_{\infty}(\lambda)$ has persistence exactly $\beta$, and
				consider the truncated diagrams $D_N(\lambda)^{>\beta}$ of
				Remark~\ref{rmk:stab}. Let $\{\delta_{N}\}_{N\in\mathbb{N}}$ be a sequence with
				$\delta_N \to 0$ as $N\to\infty$, and suppose that the hypotheses of
				Theorem~\ref{thm:finite} hold for $D_N(\lambda)^{>\beta}$ with parameters
				$(c,m,k,S)$ and probability at least $1-\delta_N$.
				Then, for any $\lambda_{I^-}\in I^{-}$ and $\lambda_{I^{+}}\in I^{+}$,
				writing
				\[
				X_N:=\PE\bigl(D_N(\lambda_{I^-})^{>\beta}\bigr)-\PE\bigl(D_N(\lambda_{I^+})^{>\beta}\bigr),
				\]
				the finite-size bound $\mathbb{P}\bigl(X_N \ge \Delta(c,m,k,S)\bigr)\ge 1-2\delta_N$
				holds for every $N$, and in the limit
				\[
				\mathbb{P}\Bigl(
				\PE\bigl(D_\infty(\lambda_{I^-})^{>\beta}\bigr)
				- \PE\bigl(D_\infty(\lambda_{I^+})^{>\beta}\bigr)
				\;\ge\; \Delta(c,m,k,S)
				\Bigr) = 1 .
				\]
		\end{corollary}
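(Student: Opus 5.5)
The finite-size part of Corollary~\ref{cor:asymptotic} is immediate. For each fixed $N$, the truncated diagrams $D_N(\lambda)^{>\beta}$ have finitely many off-diagonal points by Remark~\ref{rmk:stab}. They satisfy the hypotheses of Theorem~\ref{thm:finite} with parameters $(c,m,k,S)$ and failure probability $\delta_N$. Applying the theorem verbatim gives $\mathbb{P}(X_N\ge\Delta)\ge 1-2\delta_N$ for every $N$.

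The substance is the limit statement. I will transfer the event $\{X_N\ge \Delta\}$ to the limit through convergence in probability of $X_N$ to the deterministic quantity
\[
X_\infty:=\PE\bigl(D_\infty(\lambda_{I^-})^{>\beta}\bigr)-\PE\bigl(D_\infty(\lambda_{I^+})^{>\beta}\bigr).
\]
The main step, and the expected obstacle, is continuity of truncation followed by entropy at $D_\infty(\lambda)$. Let $\eta>0$ be the distance from $\beta$ to the set of lifetimes of $D_\infty(\lambda)$ that are at least $\beta/2$. This set is finite since $D_\infty\in\mathcal{D}_F$, and $\eta>0$ by genericity. Suppose $d(D,D_\infty)<\varepsilon$ with $\varepsilon<\eta/2$ and $\varepsilon<\beta/4$. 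Under an optimal matching, each lifetime moves by at most $2\varepsilon$. Points of $D_\infty$ with lifetime above $\beta$ are matched to off-diagonal points of $D$ with lifetime above $\beta$. Points of $D_\infty$ below $\beta$ are matched to points with lifetime below $\beta$, or to the diagonal. Any point of $D$ matched to the diagonal has lifetime at most $2\varepsilon<\beta$. Hence $D^{>\beta}$ and $D_\infty^{>\beta}$ have the same finite cardinality $n$, and their lifetimes are paired within $2\varepsilon$. Since all lifetimes exceed $\beta>0$, the normalised weights are continuous functions of the lifetimes, and $g$ is continuous on $(0,1]$. It follows that $\PE(D^{>\beta})\to\PE(D_\infty^{>\beta})$ as $\varepsilon\to0$; this is essentially \cite[Prop.~9]{Atienza2017} applied to finite diagrams, but here it is made explicit. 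So for every $\epsilon'>0$ there is $\varepsilon>0$ with
\[
\bigl|\PE(D^{>\beta})-\PE(D_\infty^{>\beta})\bigr|<\epsilon' \quad\text{whenever } d(D,D_\infty)<\varepsilon.
\]
Combining this with the assumed convergence in probability at $\lambda_{I^-}$ and at $\lambda_{I^+}$, and a union bound, gives $X_N\to X_\infty$ in probability.

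To conclude, suppose $X_\infty<\Delta$ and set $\epsilon'=(\Delta-X_\infty)/2$. Then
\[
\mathbb{P}(X_N\ge\Delta)\le\mathbb{P}\bigl(|X_N-X_\infty|\ge\epsilon'\bigr)\to0,
\]
which contradicts $\mathbb{P}(X_N\ge\Delta)\ge1-2\delta_N\to1$. Hence $X_\infty\ge\Delta$. Because $D_\infty$ is deterministic, the event in the corollary is either sure or impossible, so its probability equals $1$. The only delicate point is the cardinality stability in the continuity step: genericity of $\beta$ is exactly what prevents bars from crossing the threshold. I would also check that a bar of $D_\infty$ slightly above $\beta$ cannot be matched to the diagonal, which holds because $\varepsilon<\beta/4$.
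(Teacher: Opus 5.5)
Your proposal is correct and follows the paper's route. The paper also shows that $\PE(\cdot^{>\beta})$ is continuous at $D_\infty(\lambda)$ thanks to the genericity of $\beta$, deduces $X_N\to X_\infty$ in probability, and passes the finite-size bound to the limit; your explicit matching argument spells out the continuity step the paper attributes to bottleneck stability and Remark~\ref{rmk:stab}, and your contradiction argument for the deterministic limit is the paper's Portmanteau step in concrete form (the only nit is to use any matching of cost below $\varepsilon$ rather than an ``optimal'' one, since the infimum need not be attained).
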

		
		\begin{proof}
			By the bottleneck stability theorem, convergence $d_\infty(D_N(\lambda), D_\infty(\lambda)) \to 0$ in probability implies that, for each $\beta>0$,
			the bars of the truncated diagrams $D_N(\lambda)^{>\beta}$ converge in length to those of $D_\infty(\lambda)^{>\beta}$, because no bar of $D_\infty(\lambda)$ has persistence exactly $\beta$. 
			
			Persistent entropy is continuous with respect to the bottleneck distance on diagrams with finite number of bars (Remark~\ref{rmk:stab}).  
			Hence, for each $\lambda$,
			\[
			\PE\bigl(D_N(\lambda)^{>\beta}\bigr) \xrightarrow[N\to\infty]{\mathbb{P}} \PE\bigl(D_\infty(\lambda)^{>\beta}\bigr).
			\]
			
			Define for any $\lambda_{I^-}\in I^{-}$ and $\lambda_{I^{+}}\in I^{+}$,
			\[
			X_{N}:=\PE\bigl(D_N(\lambda_{I^-})^{>\beta}\bigr)-\PE\bigl(D_N(\lambda_{I^+})^{>\beta}\bigr)\,\,\,\,\,\text{and}\,\,\,\,\,X_\infty:=\PE\bigl(D_\infty(\lambda_{I^-})^{>\beta}\bigr)-\PE\bigl(D_\infty(\lambda_{I^+})^{>\beta}\bigr).
			\]

				Since $X_N \xrightarrow{\mathbb{P}} X_\infty$, convergence in probability implies
				convergence in distribution, and the set $[\Delta(c,m,k,S),\infty)$ is closed.
				The Portmanteau theorem therefore gives
				\[
				\mathbb{P}\bigl(X_\infty \ge \Delta(c,m,k,S)\bigr)
				\;\ge\;
				\limsup_{N\to\infty}\,\mathbb{P}\bigl(X_N \ge \Delta(c,m,k,S)\bigr)
				\;\ge\;
				\lim_{N\to\infty}(1-2\delta_N)
				\;=\; 1,
				\]
				which concludes the proof.
			\qed
		\end{proof}

		\section{Numerical experiments}\label{sec:experiments}
		
		In this section, we provide an empirical validation of our theoretical framework by investigating three distinct classes of dynamical systems, each exhibiting qualitatively different phase transition profiles: neural network training dynamics, the Kuramoto model for synchronization, and the Vicsek model for collective motion.
		For each individual case study, we begin by detailing the underlying model configurations and experimental setups. We then analyze the evolution of the resulting random persistence diagrams as the control parameter varies, and empirically
		assess the dispersion and condensation hypotheses introduced in
		Definition \ref{def:disp_cond}. These experiments illustrate the applicability of
		Theorem \ref{thm:finite} by showing that, in representative systems where its
		hypotheses are satisfied, persistent entropy exhibits the predicted
		separation between phases. Before analyzing each case study, we introduce the dynamical methodology used to operationalize our theoretical results in practice.
		
		\subsection{Numerical Detection of Dispersion–Condensation Phases}\label{sec:num_detection}
		
		Building on the theoretical framework established in Definition \ref{def:disp_cond}, we propose a numerical procedure to identify the dispersion and condensation regimes in practice. The method is based on the evolution of the normalized persistence weights, $p_i(\lambda)$, as a function of the control parameter $\lambda \in \mathbb{R}$. Specifically, the procedure searches for the parameters that maximize the entropy separation predicted by the theory while simultaneously maximizing the proportion of persistence diagrams computed along the control parameter $\lambda$, that can be classified as either dispersed or condensed.
		
		\subsubsection{Phase Classification Criteria}
		
		Given a random persistence diagram $D(\lambda)\in\mathcal{D}_{F}$, fix a maximum value $k_{\max}$, a threshold $k\in\{1,\ldots,k_{\max}\}$, and a cut-off parameter $c\in(0,1/2]$. Since Definition \ref{def:disp_cond} requires $m>k\geq 1$ and $mc\leq 1$, only values $c\leq 1/2$ need to be considered. 
        
        We define the following empirical observables from the sorted normalized persistence weights:
		\begin{itemize}
			\item $m_{c}(\lambda) = \#\{i : p_i(\lambda) \geq c\}$, which counts the number of persistent features whose normalized weight exceeds the significance threshold.
			
			\item $M_{k}(\lambda) = \sum_{i=1}^{k} p_i(\lambda)$, representing the cumulative weight carried by the $k$ dominant features.
			
			\item $S(\lambda) = \#\{i : p_i(\lambda) > 0\}$, representing the total number of off-diagonal points in the diagram.
		\end{itemize}
		
		Let
		\[
		m_{c}^{\max}=\max_{\lambda}\{m_{c}(\lambda)\}.
		\]
		For each $m\in \{k+1,\dots,m_{c}^{\max}\}$, the diagram $D(\lambda)$ is assigned a phase label $\sigma_{c,m,k}(\lambda)\in\{-1,0,1\}$ according to the following decision rules:
		\begin{equation*}
			\sigma_{c,m,k}(\lambda)=
			\begin{cases}
				-1 & \text{if } m_{c}(\lambda)\geq m
				\quad (\text{Dispersion Phase, } I^{-}),\\[2mm]
				+1 & \text{if } M_{k}(\lambda)>1-c
				\quad (\text{Condensation Phase, } I^{+}),\\[2mm]
				
				0 & \text{otherwise}
				\quad (\text{Transition/Undefined, } I^{0}).
			\end{cases}
		\end{equation*}
		
		We denote by $I^{-}(c,m,k)$, $I^{+}(c,m,k)$ and $I^{0}(c,m,k)$ the corresponding regimes.
		
		From every admissible tuple $(c,m,k)$, we define
		\[
		S_{c,m,k}=\max_{\lambda\in I^{+}(c,m,k)} S(\lambda).
		\]
		
		According to Theorem~\ref{thm:finite}, the detected entropy gap must remain strictly positive:
		\[
		\Delta(c,m,k,S_{c,m,k})
		=
		H_{\text{disp}}(c,m)
		-
		H_{\text{cond}}(c,k,S_{c,m,k})
		>0,
		\]
		where $H_{\text{disp}}$ and $H_{\text{cond}}$ denote the theoretical entropy bounds defined in the theorem.
		
		\subsubsection{Optimal Threshold Selection}
		
		The optimal parameters $(c^*,m^*,k^*)$ are selected by balancing two complementary objectives. First, the entropy gap should be as large as possible in order to maximize the theoretical separation between the two phases. Second, the resulting classification should assign the largest possible fraction of persistence diagrams to either the dispersion or the condensation regime.
		
		For every admissible parameter triple we define the classification coverage:
		\[
		\operatorname{Cov}_{c,m,k}=\frac{\# \{\lambda\,:\,\sigma_{c,m,k}\neq 0\}}{\#\{\lambda\}}
		\]
		
		Among all admissible parameter tuples, that is, those satisfying the feasibility
		constraints above together with the optional density constraints of
		Remark~\ref{rmk:opt_const}, and such that
		$\Delta(c,m,k,S_{c,m,k})>0$, we maximize the normalized objective function
		\[
		\Gamma_{c,m,k}
		=
		\frac{\Delta(c,m,k,S_{c,m,k})}
		{\log(S_{\max})}
		\,
		\operatorname{Cov}_{c,m,k},
		\]
		where
		\[
		S_{\max}
		=
		\max_{\lambda}S(\lambda)
		\]
		is the maximum number of persistence points observed in the dataset.
        The normalization by $\log(S_{\max})$ rescales the entropy gap to the same range
as the coverage term, allowing both contributions to have a comparable influence
in the objective function.

		The optimal threshold parameters $(c^*,m^*,k^*)$ are then obtained by solving the maximization problem
		\[
		(c^*,m^*,k^*)
		=
				\arg\max_{\substack{
                            0<c\le 1/2\\
							1\leq k\leq k_{\max}\\
							k+1\leq m\leq m^{\max}_{c},\\ 
							mc\le 1
				}}
				\Gamma_{c,m,k}.
				\]
				
				This yields the optimal parameters
				\[
				(c^{*},m^{*},k^{*},S_{c^{*},m^{*},k^{*}})
				\]
				for phase identification, together with the corresponding entropy gap $\Delta(c^{*},m^{*},k^{*},S_{c^{*},m^{*},k^{*}})$.
				
				\begin{remark}[\textbf{Optional Phase Constraints}]
					\label{rmk:opt_const}
					To ensure statistical significance and avoid pathological phase assignments where a single phase dominates entirely or the sequence is mostly undefined, we impose optional density constraints on the phase sets. Let $N$ be the total number of observed diagrams $D(\lambda)$. We require the proportion of diagrams in each phase to satisfy:
					$$
					\frac{|I^-|}{N} > \tau_{\text{disp}}, \quad \frac{|I^+|}{N} > \tau_{\text{cond}}
					$$
					where $|I^-|$ and $|I^+|$ denote the cardinalities of the dispersed and condensed phase sets, respectively, and $\tau_{\text{disp}}, \tau_{\text{cond}} \in [0,1]$ are user-defined tolerance hyperparameters.
				\end{remark}

				\subsection{Neural network training dynamics}\label{sec:nn}
				
				\subsubsection{Model and experimental setup}
				
				Artificial Neural Networks (ANNs) provide a computational framework inspired by the human brain, designed to learn complex patterns and relationships from data through the backpropagation algorithm and gradient descent optimization. The networks considered in this experiment are Convolutional Neural Networks (CNNs), which are specifically designed to process data with spatial structure, such as images or videos. CNNs apply learnable convolutional filters that slide across the input to extract increasingly complex features, ranging from edges and simple shapes to higher-level patterns. The resulting feature maps are then flattened and passed through fully connected layers to produce the final prediction.
				
				We use persistent homology to investigate how these convolutional filters self-organize geometrically throughout the optimization process. Specifically, following the framework introduced in \cite{gabrielsson2019exposition}, we analyze the evolution of persistent entropy over training iterations to track the topological organization of the learned filters. 

                A convolutional kernel generally consists of multiple spatial filters, one for each input channel. Following \cite{gabrielsson2019exposition}, we treat each spatial filter as an individual point in the point cloud. Therefore, a spatial filter of size $m\times n$ is flattened into a vector in $\mathbb{R}^{mn}$. Before topological analysis, each spatial filter is mean-centered and normalized, removing differences in offset and magnitude, represented as a point on the unit sphere.
                
                Collecting all flattened spatial filters at a given training iteration $I$ yields a point cloud, denoted by $X(I)\subset\mathbb{R}^{mn}$. Persistent homology is then computed on this point cloud using a Vietoris–Rips filtration induced by the Euclidean distance between spatial filters.
				
				Since the number of learned spatial filters can be very large, especially when aggregating spatial filters from multiple independently trained networks, we apply a density-based filtration following the methodology of \cite{gabrielsson2019exposition}. This procedure removes isolated low-density regions while preserving the dominant geometric structures of the spatial filters space. Let $X(I) \subset \mathbb{R}^{mn}$, be the point cloud and let $d_k(x)$ be the Euclidean distance from a point $x\in X(I)$ to its $k$-th nearest neighbor. We then define the density-filtered subset $X_{k, \tau}(I)\subset X(I)$ by retaining only the proportion $\tau$ of points with the smallest $d_k(x)$ values.
				
				\subsubsection*{MNIST dataset}
				
				For the first experiment, we consider the MNIST dataset, which contains $70,000$ grayscale images of handwritten digits from $0$ to $9$, each with a resolution of $28\times28$ pixels. The dataset was divided into $60,000$ training examples and $10,000$ test examples.
				
				The neural network architecture used in this experiment is the convolutional network $M(X,Y,Z)$ introduced in \cite{gabrielsson2019exposition}, whose configuration is summarized in Table~\ref{tab:cnn_m_architecture}. The architecture consists of two convolutional layers followed by a fully connected layer and a final Softmax readout layer. We use the configuration $M(64,32,64)$, containing $64$ kernels in the first convolutional layer and $32$ kernels in the second convolutional layer.
				
				\begin{table}[ht]
					\centering
					\caption{CNN architecture $M(X,Y,Z)$}
					\label{tab:cnn_m_architecture}
					\begin{tabular}{ll}
						\hline
						\textbf{Layer} & \textbf{Configuration} \\
						\hline
						Conv 1 & $3\times3\times X$ filters + ReLU + $2\times2$ Max Pooling \\
						Conv 2 & $3\times3\times Y$ filters + ReLU + $2\times2$ Max Pooling \\
						Fully Connected & $Z$ neurons + ReLU \\
						Readout & 10 neurons + Softmax \\
						Training & Cross-Entropy, Dropout ($p=0.5$), Adam ($lr = 10^{-3}$, weight decay = $0$) \\
						\hline
					\end{tabular}
				\end{table}
				
				Each spatial filter has size $3\times 3$ and is therefore represented as a nine-dimensional vector. We project each $3 \times 3$ spatial filter onto the unit sphere $S^8 \subset \mathbb{R}^9$ by first mean-centering and then normalizing its entries.
				
				We train $100$ independent realizations of the architecture $M(X=64,Y=32,Z=64)$ for $10,000$ training iterations, using a batch size of $128$. All trained networks reach an evaluation accuracy of approximately $99\%$.
				
				To study the evolution of the spatial filter geometry throughout training, we construct a point cloud every $100$ iterations, considering
$I\in\{100,200,\dots,10000\}$ for each convolutional layer. These point clouds are defined as follows:
				\begin{itemize}
					\item First convolutional layer, $X^{1}(I)$: Each of the $100$ trained networks contains $64$ spatial filters in the first layer. Pooling these spatial filters yields a point cloud $X^1(I) \subset \mathbb{R}^9$ containing $6,400$ points ($64 \times 100$).
					\item Second convolutional layer, $X^{2}(I)$: Each convolutional kernel receives $64$ input channels. Therefore, each kernel is decomposed into $64$ spatial filters of size $3\times 3$. Since the layer contains $32$ output kernels, each network contributes $32\times 64=2,048$ spatial filters. Aggregating the spatial filters from the $100$ trained networks gives $204,800$ points in $\mathbb{R}^{9}$, defining $X_2(I)$. 
				\end{itemize}
				
				Given a point cloud, we apply the density-filtration described previously. Due to the structural differences and sizes of the datasets, the filtration parameters are tailored for each layer:
				\begin{itemize}
					\item For $X^1(I)$, we set $k = 200$ and $\tau = 0.3$, retaining the $1,920$ densest points for subsequent topological analysis.
					\item For $X^2(I)$, we set $k = 10$ and $\tau = 0.1$, yielding a subset of $20,480$ points. Since this point cloud remains computationally expensive for Vietoris--Rips persistence, we further apply Ripser's greedy permutation subsampling, obtaining a representative set of $2,000$ landmark points for the persistent homology computation.
				\end{itemize}
				
				\subsubsection*{CIFAR-10 dataset}
				
				For the second experiment, we extend the previous analysis to the CIFAR-10 dataset, which contains $60,000$ color images belonging to $10$ object categories. Each sample consists of a $32\times 32$ RGB image, resulting in three input channels corresponding to the red, green, and blue components. Compared with MNIST, CIFAR-10 introduces a more complex visual structure due to the presence of natural images, multiple color channels, and higher intra-class variability.
				
				The dataset was divided into $50,000$ training images and $10,000$ test images. Following CIFAR-10 experiment in \cite{gabrielsson2019exposition}, the training images were randomly cropped to a spatial resolution of $24 \times 24$ pixels and randomly flipped horizontally with probability $0.5$. After these transformations, the images were converted into tensors and normalized independently for each RGB channel by subtracting the channel-wise mean and dividing by the corresponding standard deviation computed from the CIFAR-10 training set. The test images were center-cropped to $24\times 24$ pixels and then normalized using the same channel-wise statistics. 	
				
				The convolutional neural networks considered in this experiment follow the architecture $C(X,Y,Z)$ introduced in \cite{gabrielsson2019exposition} and summarized in Table \ref{tab:cnn_c_architecture}. In particular, we use the configuration $C(X=64,Y=32,Z=64)$ which consists of two convolutional layers with local response normalization, followed by a fully connected layer and a Softmax output layer.
				
				\begin{table}[ht]
					\centering
					\caption{CNN architecture $C(X,Y,Z)$}
					\label{tab:cnn_c_architecture}
					\begin{tabular}{ll}
						\hline
						\textbf{Layer} & \textbf{Configuration} \\
						\hline
						Conv 1 & $3\times3\times X$ filters + ReLU + $3\times3$ Max Pooling (stride 2) + LRN \\
						Conv 2 & $3\times3\times Y$ filters + ReLU + $2\times2$ Max Pooling (stride 1) + LRN \\
						Fully Connected & $Z$ neurons + ReLU \\
						Readout & 10 neurons + Softmax \\
						Training & Cross-Entropy, SGD ($lr=0.01$, momentum= $0.9$, weight decay=$10^{-4}$) \\
						\hline
					\end{tabular}
				\end{table}
				
				We train $60$ independent networks for $20,000$ training iterations using a batch size of $124$.
				
				To study the evolution of the learned convolutional spatial filters, we construct a point cloud at every $100$ training iterations. At each recorded iteration $I$, spatial filters from each convolutional layer are extracted and used to construct the corresponding point clouds.
				\begin{itemize}
					\item First convolutional layer, $X^{1}(I)$: Each kernel contains three spatial filters corresponding to the RGB channels. Therefore, each trained network contributes $64 \times 3 = 192$ spatial filters. Aggregating the spatial filters from the $60$ independently trained networks results in the point cloud $X^{1}(I)\subset\mathbb{R}^9$ containing $11,520$ points ($60\times 64\times 3$).
					\item Second convolutional layer, $X^{2}(I)$: The second convolutional layer maps $64$ input channels $32$ output channels. This structure results in $32 \times 64 = 2,048$ spatial filters of size $3 \times 3$ per network. Aggregating these filters across all $60$ trained realizations yields a large-scale point cloud $X^2(I) \subset \mathbb{R}^9$ containing $122,880$ points ($60 \times 32 \times 64$).
				\end{itemize}
				
				To reduce topological noise while preserving the most representative geometric structures, we again apply density filtration to the point clouds.
				
				\begin{itemize}
					\item For $X^{1}(I)$, we use the parameters $k=200$ and $\tau=0.14$, retaining the $1,612$ densest points for the subsequent topological analysis.
					
					\item For $X^{2}(I)$, the considerably larger size of the point cloud requires a more aggressive density filtration. We therefore set $k=15$ and $\tau=0.1$, obtaining a filtered subset of $12,288$ points. Since computing persistent homology on such a large point cloud remains computationally expensive, we apply again Ripser's greedy permutation subsampling to obtain a representative set of $2,000$ landmark points on which the persistent homology computation is performed.
				\end{itemize}

				\subsubsection{Results}
				
				We analyze the structural evolution of the spatial filters point cloud throughout the training process by tracking its underlying topological features.
				
				\subsubsection*{MNIST dataset}
				
				As shown in Figure~\ref{fig:mnist_accuracy}, we compute the mean accuracy across different training iterations and average it over all network instances. Standard deviation remains very small throughout training, indicating that the accuracy evolves consistently across different runs. A rapid increase in accuracy is observed during the early stages of training, up to approximately iteration $1,000$, after which it gradually stabilizes and converges to a final accuracy of $99\%$. This evolution in accuracy is closely related to the topological changes shown in Figure~\ref{fig:mnist_barcodes}. 
				
				\textbf{Layer 1}
				
				The dynamics of Layer $1$ are particularly interesting, as the formation of distinct structures can be clearly observed throughout the training process. At initial phase of training, when the accuracy is still low, the $H_1$ persistence barcodes for Layer $1$, consist mainly of short-lived intervals with no dominant topological features. As the accuracy begins to increase, one of the $H_1$ bars gradually becomes more prominent. Around iteration $300$, a distinct topological structure emerges, characterized by a highly persistent $H_1$ bar that clearly stands out from the background noise. This dominant topological feature remains stable throughout the rest of the training process, up to iteration $5,000$, suggesting that the spatial filters organize around a low-dimensional geometric structure containing a prominent one-dimensional cycle. From iteration $6,000$ onwards, when the test accuracy has already reached $99\%$, the geometry of the filter space changes, creating a second persistent cycle that remains until the final iteration and therefore leading to a space dominated by two circular structures.
				
				\textbf{Layer 2}
				
				For the second convolutional layer, the topological evolution differs from that observed in the first layer. At the beginning of training, the persistence barcodes already reveal the presence of a dominant $H_1$ feature, indicating that the spatial filters are organized around a relevant circular structure. However, as the classification accuracy begins to increase, multiple distinct, highly persistent $H_1$ bars simultaneously emerge and stand out from the topological noise. These multiple topological features remain clearly visible through to iteration $10,000$, although in the final iterations the relevant bars begin to become shorter. This behavior indicates that the deeper convolutional layer self-organizes into a more complex topological space, capturing multiple independent circular features or a higher-dimensional manifold with multiple loops.
				
				\begin{figure}
					\centering
					\includegraphics[width=0.6\textwidth]{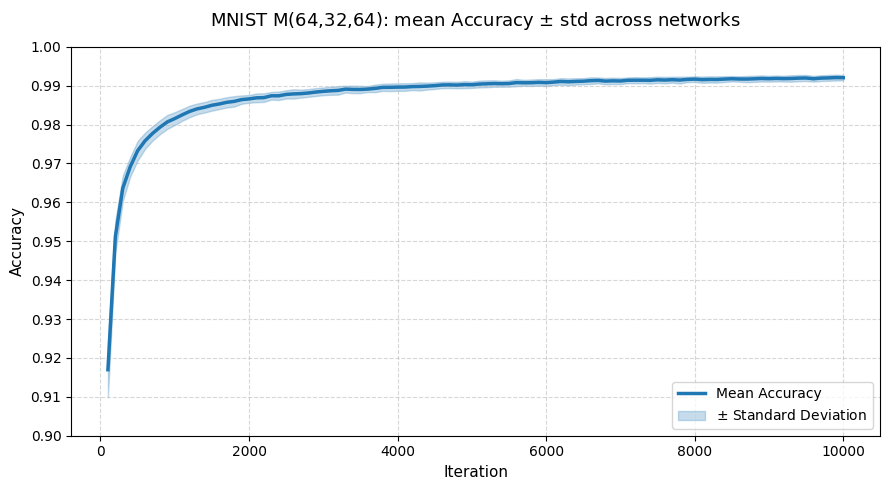}
					\caption{Mean accuracy ($\pm$ standard deviation) across $100$ neural networks at each training iteration.}
					\label{fig:mnist_accuracy}
				\end{figure}
				
				\begin{figure}
					\centering
					\includegraphics[width=1\textwidth]{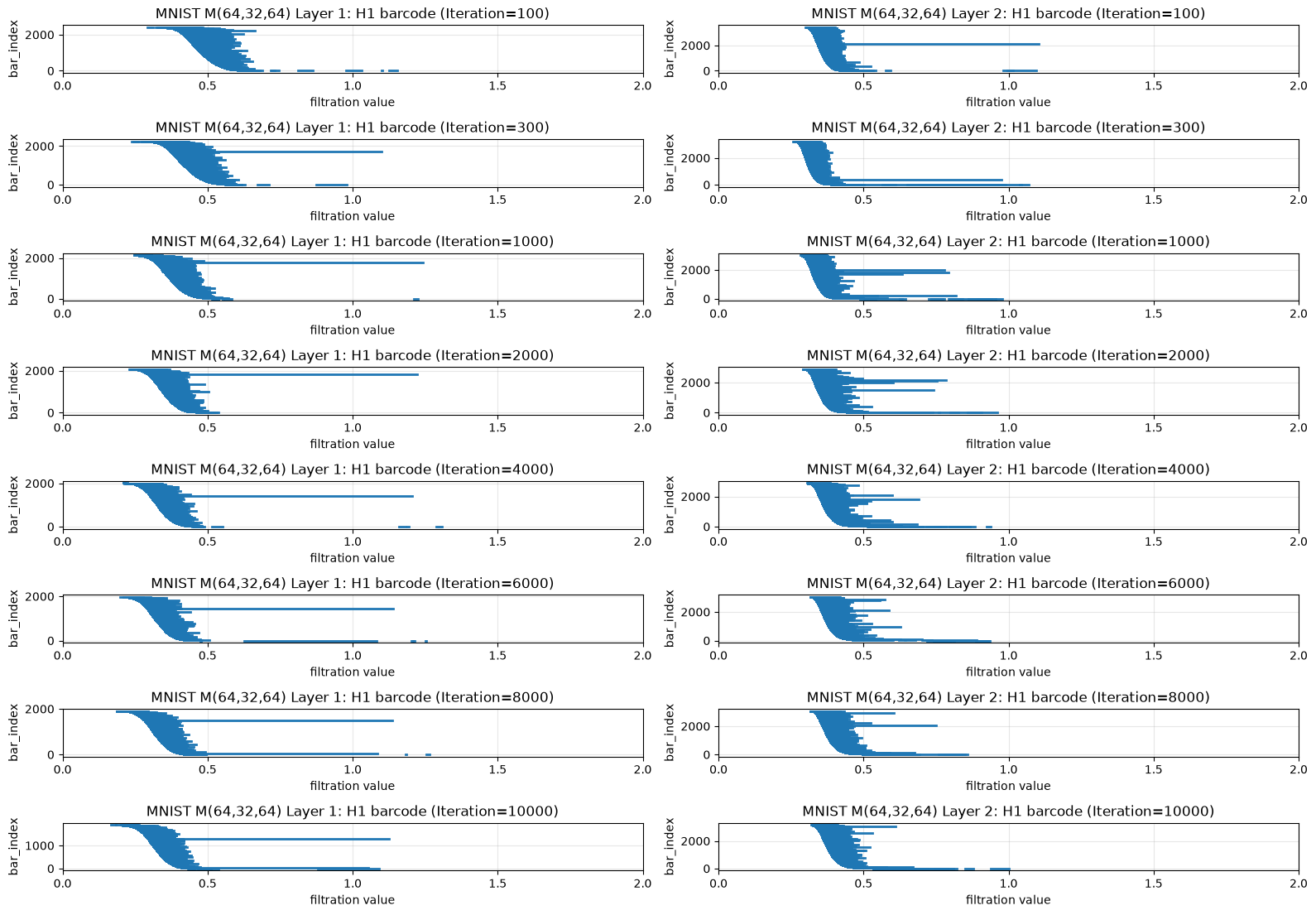}
					\caption{Persistence barcodes for dataset MNIST with architecture $M(64,32,64)$ at $8$ representative iterations for density filtration $p(k=200, \tau=0.3)$ for layer $1$ and density filtration $p(k=10, \tau=0.1)$ for layer $2$}
					\label{fig:mnist_barcodes}
				\end{figure}
				
				\subsubsection*{CIFAR-10 dataset}
				As shown in Figure~\ref{fig:cifar_accuracy}, the test accuracy increases gradually during approximately the first $10,000$ training iterations before reaching a more stable regime for the remainder of the optimization process. Compared with MNIST, the learning curve is noticeably slower, with the test accuracy remaining below $40\%$ during the early stages of training. This behavior reflects the greater complexity of the CIFAR-10 dataset and the more challenging classification task. The network eventually converges to a final test accuracy of approximately $77\%$. As in the MNIST experiments, this evolution in classification performance is accompanied by a progressive reorganization of the topology of the learned filters space, as illustrated by the persistence barcodes in Figure~\ref{fig:cifar_barcodes}.
				
				\textbf{Layer 1}
				
				The first convolutional layer exhibits a clear topological evolution throughout the training process. During the initial stages of training, the $H_1$ persistence barcodes are dominated by numerous short-lived intervals together with a few moderately persistent features, indicating a highly noisy and weakly organized filter space. As training progresses and the classification accuracy improves, the amount of topological noise gradually decreases, suggesting that the spatial filters become increasingly structured. Once the test accuracy reaches its plateau, around iteration $10,000$, the persistence barcode reveals a stable topological organization that remains essentially unchanged until the end of training. In particular, one highly persistent $H_1$ cycle clearly dominates the barcode, while a second cycle, born at a relatively late stage of the filtration, is consistently observed. Unlike the behavior observed for MNIST, these structures emerge more gradually and only become clearly distinguishable after the network has nearly converged, reflecting the greater complexity of the CIFAR-10 classification task.
				
				\textbf{Layer 2}
				
				The evolution of the second convolutional layer provides a less pronounced topological signature than that observed in the first layer. During the initial stages of optimization, the persistence barcodes are almost entirely composed of short-lived intervals, indicating that the space of spatial filters is largely dominated by topological noise with little evidence of meaningful global organization. As training progresses, several persistent $H_1$ features emerge, suggesting that the spatial filters begin to organize into multiple topological structures. Although moderately persistent cycles appear during the intermediate stages of optimization, their persistence progressively decreases in the later iterations. Towards the end of optimization, the dominant $H_1$ features decrease in persistence, indicating that the space of spatial filters does not condense into a small number of highly persistent structures. Instead, the learned representations retain a more distributed topological organization.

				\begin{figure}
					\centering
					\includegraphics[width=0.6\textwidth]{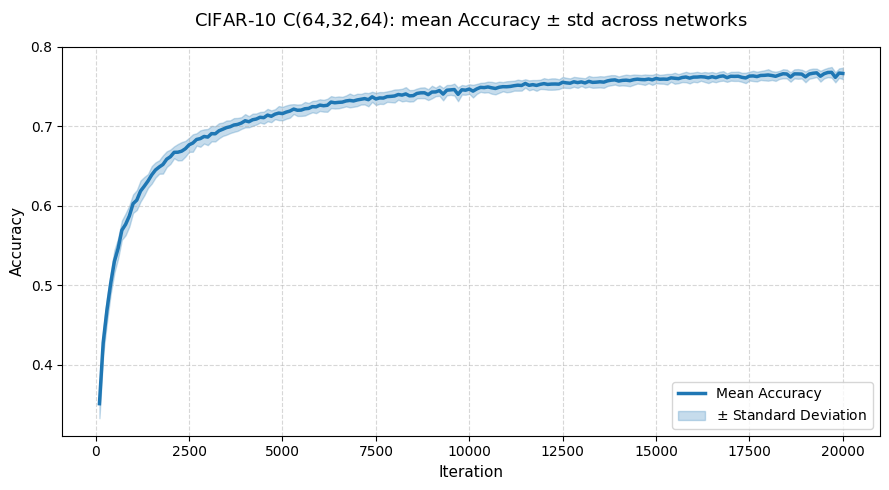}
					\caption{Mean accuracy ($\pm$ standard deviation) across $100$ neural networks at each training iteration.}
					\label{fig:cifar_accuracy}
				\end{figure}
				
				\begin{figure}
					\centering
					\includegraphics[width=1\textwidth]{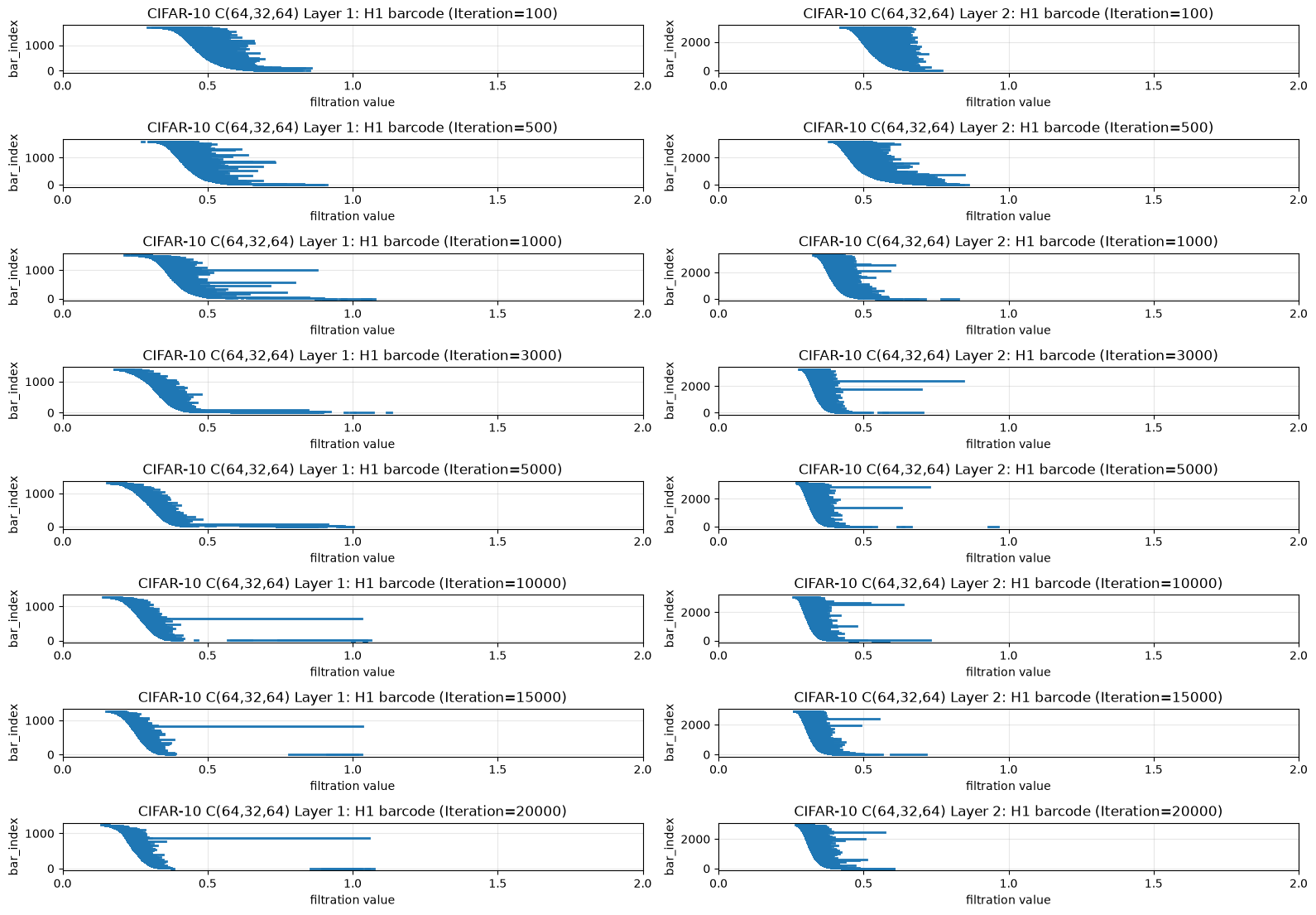}
					\caption{Persistence barcodes for dataset CIFAR-10 with architecture $C(64,32,64)$ at $8$ representative iterations for density filtration $p(k=200, \tau=0.14)$ for layer $1$ and density filtration $p(k=15, \tau=0.1)$ for layer $2$}
					\label{fig:cifar_barcodes}
				\end{figure}

				\subsubsection{Relation to the general theorem}
				The barcodes reported above reproduce, under the density filtration of~\cite{gabrielsson2019exposition}, the circular organization of spatial filters described in that work: at late training iterations a single $H_1$ interval dominates the first-layer barcode. What that observation does not settle is when the structure emerges, or on what basis one is entitled to call it emerged. Both are decided below by Theorem~\ref{thm:finite}. Rather than declaring a cycle present when a bar looks long, we ask whether the normalized persistence weights satisfy the condensation hypothesis, and we report the certified entropy gap that separates the resulting regime from the dispersed one. The training iteration at which the classification flips is then the onset of the transition, and it is this quantity, not the visual prominence of a bar, that we compare across datasets and layers.

				To relate the experimental observations to Theorem~\ref{thm:finite}, we identify the control parameter with the training iteration, $\lambda \equiv I$. For each iteration $I$, we construct a random persistence diagram $D(I)$ by applying Vietoris-Rips persistent homology to the density-filtered point cloud pooled from all the trained networks.
				
				\subsubsection*{MNIST dataset}
				
				\textbf{Layer 1}
				
				For first layer, as shown in the barcodes of Figure \ref{fig:mnist_barcodes}, the diagrams contain a significant amount of topological noise. To isolate the essential features, all persistence bars with lifetime smaller than $0.15$ are discarded before computing the persistent entropy. This value provides normalized persistence weights with a
				clear separation between relevant and negligible features, allowing the
				dispersion and condensation criteria of Definition \ref{def:disp_cond} to be applied consistently.
				
				The evolution of the persistent entropy is shown in Figure \ref{fig:mnist_evolution}. Consistent with the barcodes dynamics, the persistent entropy drops significantly during the early phase of optimization. This drop corresponds to the concentration of persistent mass into a small set of dominant bars, signaling a transition from a dispersed phase ($I^-$) to a condensed phase ($I^+$).
				
				To quantitatively characterize the dispersion-condensation framework, we apply the numerical detection procedure introduced in Section~\ref{sec:num_detection}. Inspection of the evolution of the normalized persistence weights shows that the diagrams never condense into more than two dominant cycles. Consequently, we impose the constraint $k_{\max}=2$. The optimization procedure identifies the following threshold parameters:
				\[
				c^* = 0.028, \quad m^* = 32, \quad k^* = 2, \quad S^* = 2
				\]
				These values maximize both the entropy gap and the coverage of the phase classification, yielding
				\[
				\Delta(c^*, m^*, k^*, S^*) = H_{\text{disp}}(c^*, m^*) - H_{\text{cond}}(c^*, k^*, S^*) = 2.6777 > 0.
				\]
				
				The resulting phase classification is shown in Figure~\ref{fig:mnist_phase_classification}. During the first training iterations, the persistence diagrams are classified as dispersed. However, due to the rapid consolidation of the space of spatial filters into a small number of dominant topological components, some normalized persistence weights become negligible while others increase substantially, leading the system into the condensed phase. According to our criterion, a diagram belongs to the condensed phase whenever the cumulative persistence of the $k^*=2$ most persistent features exceeds $1-c^*$. 
				As illustrated in Figure~\ref{fig:mnist_phase_classification}, the condensed regime consists of an initial interval in which a single dominant topological feature is present, resulting in zero persistent entropy, followed by a later interval in which the entropy stabilizes around $0.5$. This stabilization suggests the emergence of an additional dominant cycle. This behavior is consistent with the evolution of the persistence barcodes shown in Figure~\ref{fig:mnist_barcodes}, where a second prominent $H_1$ feature becomes visible during the later stages of training.
				
				Consequently, three distinct training regimes can be identified: an initial dispersed phase, $I^{-}=[0,100]$, a transition region, $I^{0}=(100,500]$, during which the persistent entropy decreases rapidly, and a condensed phase, $I^{+}=(500,10000]$. 
				
				Finally, the entropy gap inequality predicted by Theorem~\ref{thm:finite} is satisfied for every dispersed-condensed pair identified by the numerical procedure:
				\[
				\PE\bigl(D(\lambda_{I^-})\bigr)-\PE\bigl(D(\lambda_{I^+})\bigr)
				\ge
				\Delta(c^*,m^*,k^*,S^*).
				\]
				
				\bigskip
				
				\textbf{Layer 2}
				
				For the second convolutional layer, persistence intervals with lifetime smaller than $0.1$ are discarded before computing the persistent entropy in order to suppress topological noise. The entropy evolution shown in Figure~\ref{fig:mnist_evolution} closely follows the behavior of the persistence barcodes. During the earliest training iterations, the persistence mass is concentrated in only a few dominant intervals, resulting in relatively low entropy. As optimization progresses, several $H_1$ barcodes emerge simultaneously (Figure~\ref{fig:mnist_barcodes}), producing a more uniform distribution of the normalized persistence weights and, consequently, a significant increase in persistent entropy. This behavior reflects a transition towards a more dispersed topological regime.
				
				As in the first convolutional layer, we apply the numerical phase detection procedure described in Section~\ref{sec:num_detection}. We set $k_{\max}=4$, motivated by the distribution of the normalized persistence weights. Inspecting the persistence diagrams, we observe that the diagrams exhibiting the strongest concentration of persistence tend to have their total persistence mass dominated by at most four leading features. With this constraint, the optimization yields the parameters
				\[
				c^* = 0.032, \quad m^* = 17, \quad k^* = 4, \quad S^* = 4
				\]
				resulting in an entropy gap
				\[
				\Delta(c^*, m^*, k^*, S^*) = H_{\text{disp}}(c^*, m^*) - H_{\text{cond}}(c^*, k^*, S^*) = 0.7261 > 0
				\]
				
				In this layer, as shown in Figure~\ref{fig:mnist_barcodes}, during the initial stage of training, the persistence mass is concentrated in only a few dominant cycles. Consequently, the cumulative weight of the four largest persistence intervals satisfies the condensation criterion of Definition~\ref{def:disp_cond}, and these diagrams are classified as belonging to the condensed regime $I^{+}=[0,500)$.
				As training progresses, the persistence mass becomes progressively distributed among a larger number of comparable $H_{1}$ features. From approximately iteration $3000$ onward, many diagrams contain at least $m^*=17$ persistence intervals whose normalized weights exceed the optimal threshold $c^{*}=0.032$, satisfying the dispersion criterion and therefore being assigned to the dispersed regime $I^{-}=(3000,10000]$.
				
				The Figure \ref{fig:mnist_phase_classification} also shows some diagrams that remain unlabeled $I^{0}$ during the late stages of training. This does not indicate a failure of the method. Rather, these diagrams exhibit a highly distributed persistence profile but do not strictly satisfy the dispersion conditions. Consequently, the numerical procedure leaves them unclassified instead of forcing a phase assignment. For this reason, we interpret these unlabeled points as conservative outliers of the classification algorithm rather than indicators of a distinct dynamical behavior.
				
				Finally, every dispersed-condensed pair identified by the algorithm satisfies the entropy gap inequality predicted by Theorem~\ref{thm:finite},
				\[
				\PE\bigl(D(\lambda_{I^-})\bigr)-\PE\bigl(D(\lambda_{I^+})\bigr)
				\ge
				\Delta(c^*,m^*,k^*,S^*),
				\]
				providing additional numerical evidence supporting the finite-size entropy detection theorem.
				
				\begin{figure}
					\centering
					\includegraphics[width=0.8\textwidth]{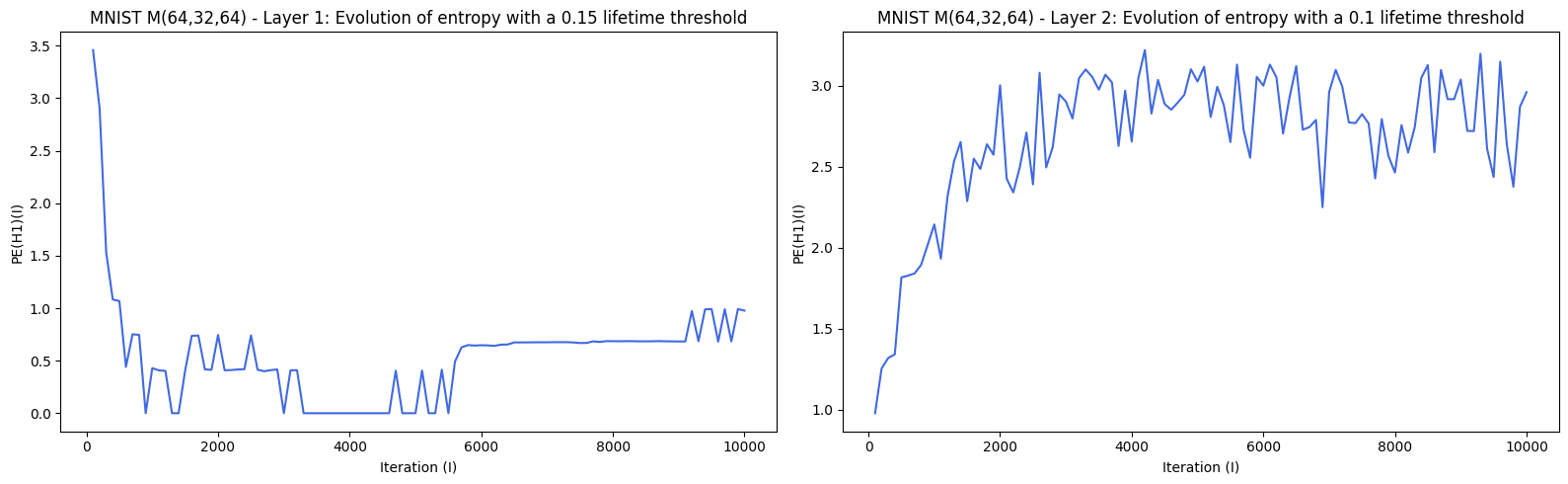}
					\caption{Entropy evolution for MNIST $M(64,32,64)$.}
					\label{fig:mnist_evolution}
				\end{figure}
				
				\begin{figure}
					\centering
					\includegraphics[width=0.8\textwidth]{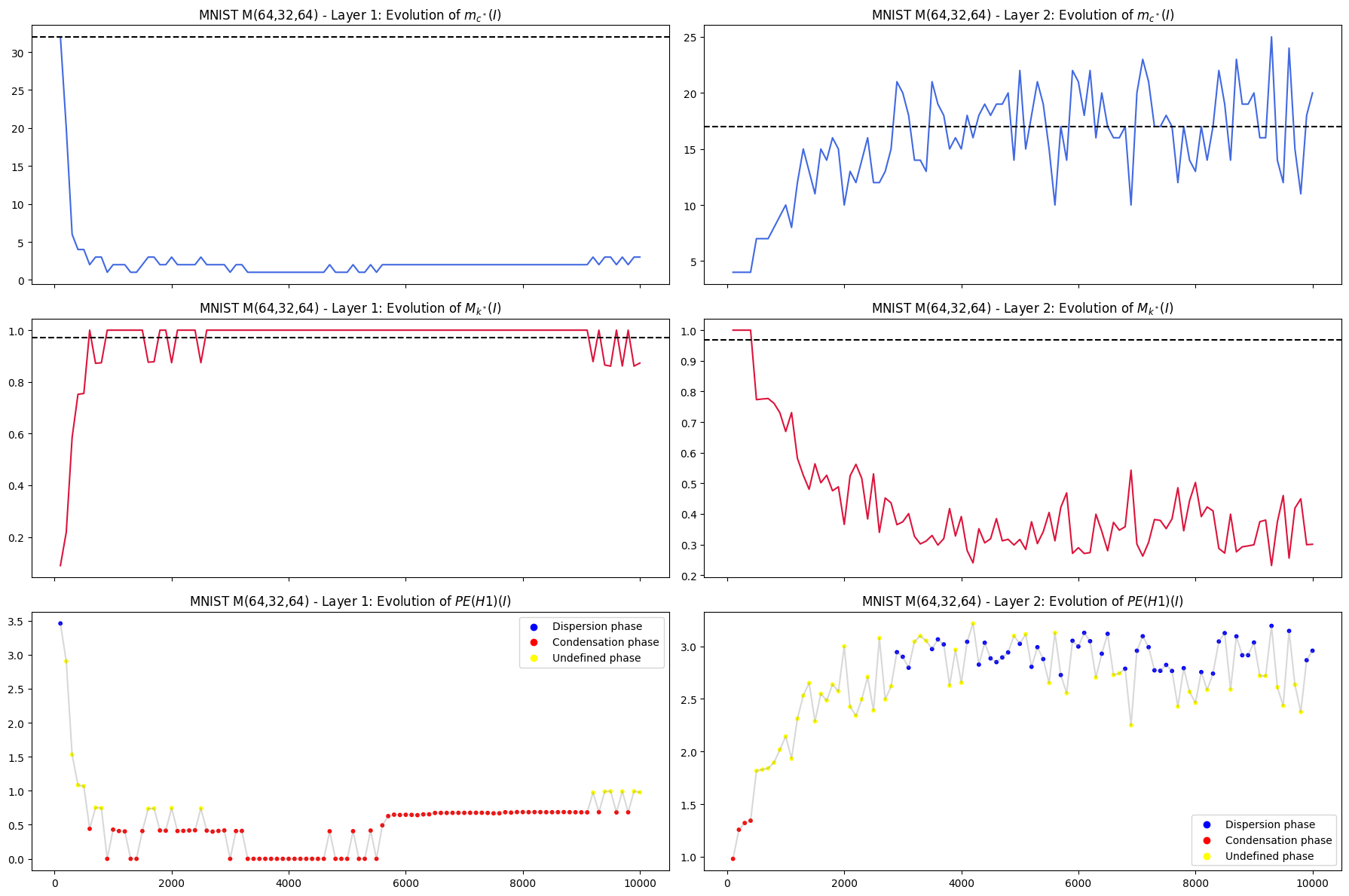}
					\caption{Evolution of $m_{c^*}(I)=\#\{i:p_{i}(I)\geq c^*\}$, $M_{k^*}(I) = \sum_{i=1}^{k^*}p_{i}(I)$ and phase classification between dispersed ($m_{c^*}(I)\ge m^*$) and condensed ($M_{k^*}(I)> 1-c^*$) regimes for both convolutional layers.}
					\label{fig:mnist_phase_classification}
				\end{figure}
				
				\subsubsection*{CIFAR-10 dataset}
				
				\textbf{Layer 1}
				
				For the first convolutional layer, persistence intervals with lifetime smaller than $0.13$ are discarded before computing the persistent entropy in order to suppress topological noise. As shown in Figure~\ref{fig:cifar_evolution}, the entropy decreases progressively throughout training, indicating that the normalized persistence weights become increasingly concentrated. This behavior is fully consistent with the evolution of the persistence barcodes shown in Figure~\ref{fig:cifar_barcodes}, where the initially noisy diagrams gradually develop a small number of dominant $H_1$ features.
				
				We next apply the numerical phase detection procedure introduced in Section~\ref{sec:num_detection}. Since the barcode evolution indicates that the condensed regime never contains more than three dominant persistence intervals, we set $k_{\max}=3$. Moreover, to avoid pathological solutions with an almost empty dispersed phase, we impose the optional coverage constraint $\tau_{\mathrm{disp}}=0.02$ (Remark~\ref{rmk:opt_const}). The optimization yields
				\[
				c^*=0.009,\qquad
				m^*=47,\qquad
				k^*=3,\qquad
				S^*=3,
				\]
				with entropy gap
				\[
				\Delta(c^*,m^*,k^*,S^*)
				=
				H_{\mathrm{disp}}(c^*,m^*)-
				H_{\mathrm{cond}}(c^*,k^*,S^*)
				=
				1.1647>0.
				\]
				
				The resulting classification, shown in Figure~\ref{fig:cifar_phase_classification}, identifies an initial dispersed regime, approximately $I^-=[0,500)$, in which at least $m^*=47$ persistence intervals carry normalized weights larger than the threshold $c^*=0.009$. As training progresses, the persistence mass gradually concentrates into only a few dominant topological features. Consequently, the algorithm detects a condensed regime that starts around iteration $11\,100$, approximately $I^+=[11100,20000)$, where the three largest persistence weights account for almost the entire persistence mass. This observation is consistent with the evolution of the barcodes shown in Figure~\ref{fig:cifar_barcodes}, where, from approximately iteration $10,000$ onward, two dominant persistence intervals emerge and remain stable until the end of training.
				
				Finally, every dispersed-condensed pair identified by the algorithm satisfies the entropy gap inequality predicted by Theorem~\ref{thm:finite},
				\[
				\PE\bigl(D(\lambda_{I^-})\bigr)-\PE\bigl(D(\lambda_{I^+})\bigr)
				\ge
				\Delta(c^*,m^*,k^*,S^*),
				\]
				providing numerical support for the finite-size entropy detection theorem.
				
				\bigskip
				
				\textbf{Layer 2}
				
				For the second convolutional layer, the same persistence threshold of $0.13$ is applied before computing the persistent entropy in order to suppress topological noise. As shown in Figure~\ref{fig:cifar_evolution}, the entropy is initially very high, reflecting persistence diagrams dominated by numerous low-persistence intervals. During training, several relevant topological features emerge and the entropy decreases accordingly. Towards the end of training, however, the entropy increases slightly again as the dominant persistence intervals become shorter and increasingly embedded within the surrounding topological noise, in agreement with the barcode evolution shown in Figure~\ref{fig:cifar_barcodes}.
				
			The diagrams exhibiting the strongest concentration of persistence mass show that the largest contributions are typically carried by the three most significant persistence weights. Therefore, we set $k_{\max}=3$ for the phase detection procedure. To avoid degenerate solutions, we impose the optional constraint $\tau_{\mathrm{disp}}=0.01$. The optimization procedure yields
				\[
				c^*=0.002,\qquad
				m^*=245,\qquad
				k^*=3,\qquad
				S^*=3,
				\]
				producing the positive entropy gap
				\[
				\Delta(c^*,m^*,k^*,S^*)
				=
				H_{\mathrm{disp}}(c^*,m^*)-
				H_{\mathrm{cond}}(c^*,k^*,S^*)
				=
				2.2769>0.
				\]
				
				The optimal parameters define a particularly stringent dispersion criterion, requiring at least $m^*=245$ persistence intervals to exceed the threshold $c^*=0.002$. Under these conditions, the algorithm identifies an initial dispersed regime, approximately $I^-=[0,400)$, followed by a condensed regime, approximately $I^+=[1200,5700)$. This behavior agrees with the persistence barcodes, which exhibit their clearest dominant cycles during the intermediate stages of training.
				
				During the final part of the optimization, most persistence diagrams remain unclassified and are assigned to the transition set $I^0$. This does not indicate the emergence of a new phase. Instead, the persistence diagrams no longer satisfy either the dispersion or the condensation conditions of Definition~\ref{def:disp_cond}. The dominant persistence intervals progressively lose prominence while the remaining persistence mass is not sufficiently distributed to satisfy the stringent dispersion criterion. Consequently, these late-training diagrams naturally fall into the undefined region $I^0$, reflecting a gradual loss of the clear dispersion--condensation structure observed earlier in training.
				
				As in the previous experiments, all classified dispersed-condensed pairs satisfy the entropy gap inequality of Theorem~\ref{thm:finite}, providing further numerical evidence for the validity of the theoretical framework.

				\begin{figure}
					\centering
					\includegraphics[width=0.8\textwidth]{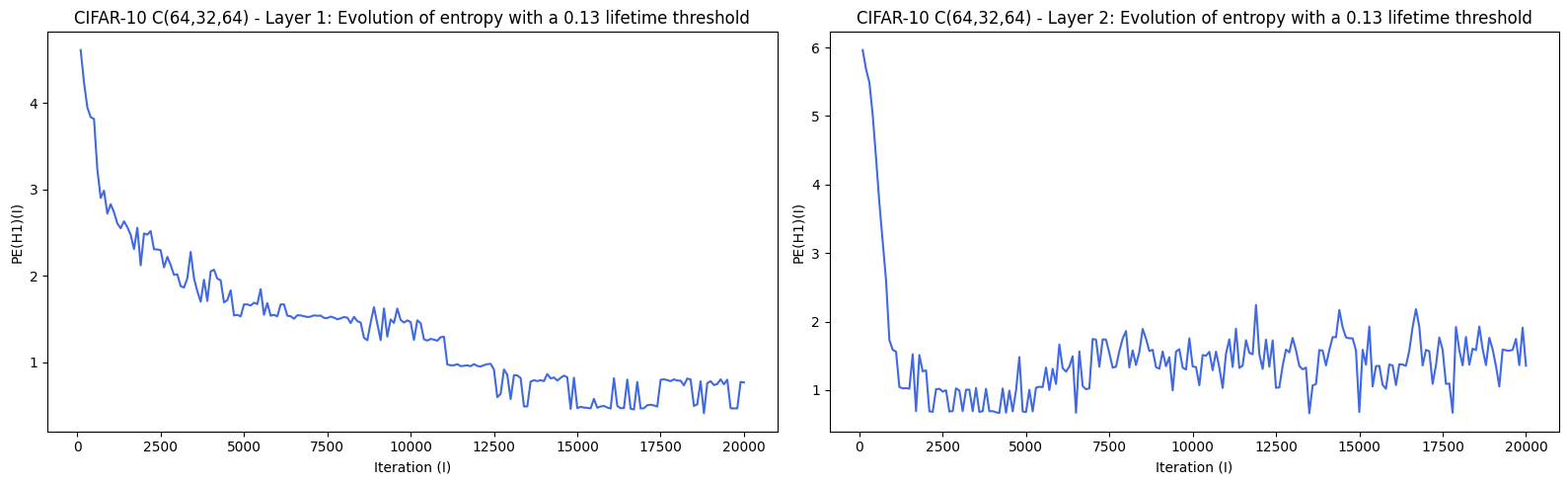}
					\caption{Entropy evolution for CIFAR-10 $C(64,32,64)$.}
					\label{fig:cifar_evolution}
				\end{figure}
				
				\begin{figure}
					\centering
					\includegraphics[width=0.8\textwidth]{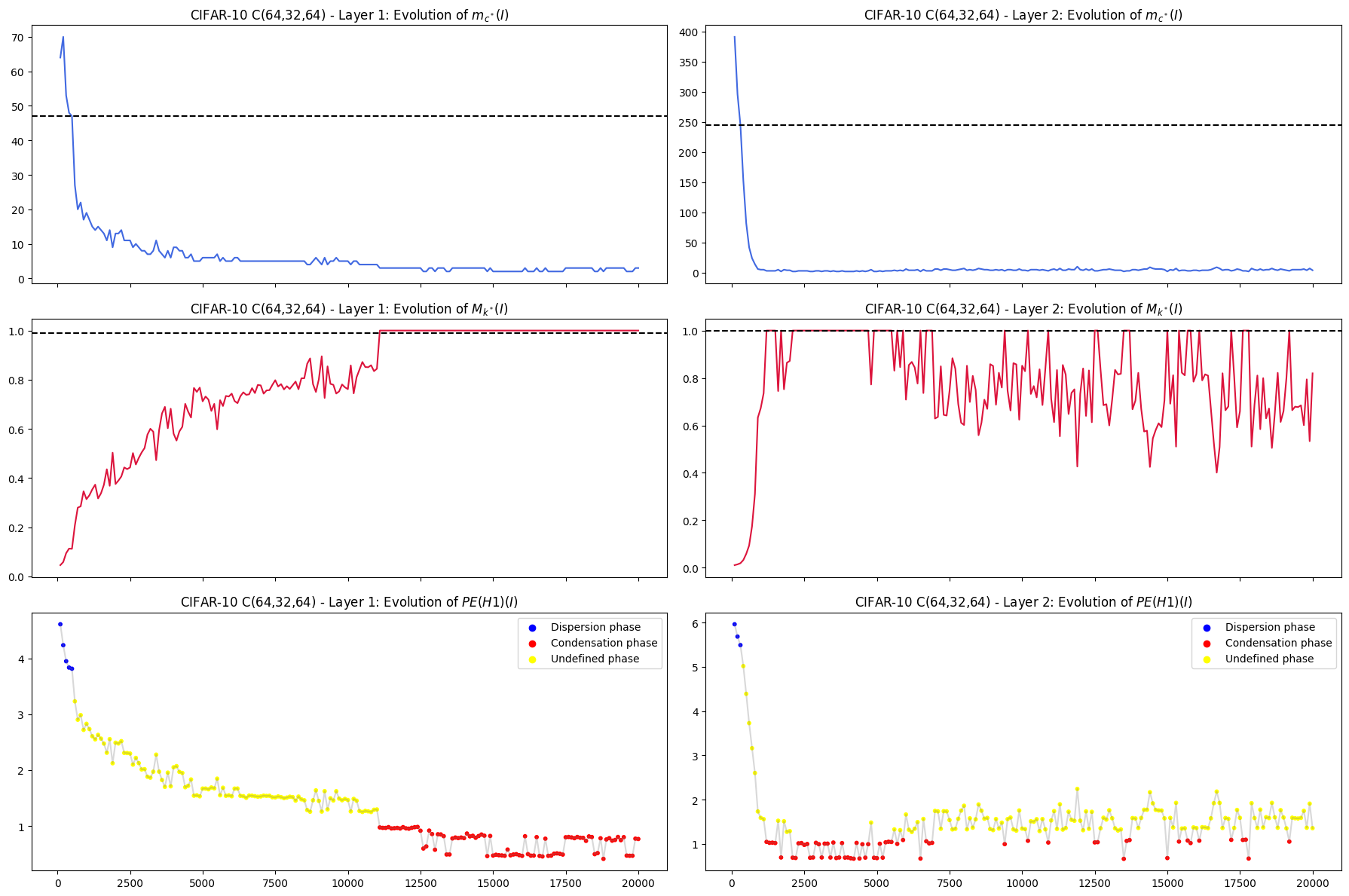}
					\caption{Evolution of $m_{c^*}(I)=\#\{i:p_{i}(I)\geq c^*\}$, $M_{k^*}(I) = \sum_{i=1}^{k^*}p_{i}(I)$ and phase classification between dispersed ($m_{c^*}(I)\ge m^*$) and condensed ($M_{k^*}(I)> 1-c^*$) regimes for both convolutional layers.}
					\label{fig:cifar_phase_classification}
				\end{figure}
				
				\subsection{Kuramoto synchronization}\label{sec:kuramoto}
				
				\subsubsection{Model and experimental setup}
				
				The Kuramoto model describes a network of coupled limit-cycle oscillators, where
				each oscillator's frequency is influenced by its intrinsic dynamics and
				interactions~\citep{english2007synchronization}. The dynamics is governed by
				\[
				\frac{d\theta_i}{dt} = 
				\omega_i + \frac{K}{N} \sum_{j=1}^{N} G_{i,j} \sin(\theta_j - \theta_i),
				\]
				where $\theta_i$ and $\omega_i$ are the phase and natural frequency of oscillator
				$i$, $K$ is the coupling constant, and $G_{i,j}$ is an adjacency matrix
				establishing the connectivity among oscillators. The degree of synchronicity is
				measured by the Kuramoto order parameter
				\[
				r = 
				\frac{1}{N}\sqrt{\left(\sum_{j=1}^{N} \cos(\theta_j)\right)^2 + \left(\sum_{j=1}^{N} \sin(\theta_j)\right)^2},
				\]
				which ranges from zero (incoherence) to one (full synchronization).
				
				Following the approach of~\citep{stolz2017persistent}, we construct functional
				networks from the pairwise synchronicity coefficient
				\[
				\phi_{i,j}^{T_k} = \langle | \cos(\theta_i^k - \theta_j^k) | \rangle,
				\]
				where $k$ indicates the time regime and the angular brackets denote the average
				across simulations. We simulate the Kuramoto model with $50$ oscillators,
				using random binary symmetric adjacency matrices without self-loops, initial
				frequencies sampled from a Gaussian distribution with mean~$0$ and standard
				deviation~$1$, and initial phases sampled uniformly in $(0,2\pi)$. Networks have
				$50$ nodes, $957$ edges, average degree~$38.28$, and density~$0.762$. The system
				is integrated over $500$ time points in $T=[0,10]$ with $\Delta t=0.02$ for
				coupling strengths $K\in[0,16]$ with step $0.5$, with $10$ repetitions per
				integration. The order parameter $r$ and pairwise synchronization $\phi_{i,j}$
				are computed for each $K$ and time point.

                To study the synchronization phases of the oscillators, we identify the control parameter with time, $\lambda\equiv t$. For each value of $K$, we construct a time-dependent persistence barcodes $D(K,t)$ by applying Vietoris--Rips persistent homology to the pairwise synchronicity matrix $\Phi(t)=(\phi_{i,j}(t))$, viewed as a distance matrix. For the essential infinite bar del diagram, the death value is taken as the maximum
distance in the corresponding distance matrix.

				\subsubsection{Results}
				
				Figure~\ref{fig:kuramoto_r} shows the temporal evolution of the Kuramoto order
				parameter. For low coupling strengths, the system remains in an incoherent
				regime with low and strongly fluctuating values of $r(t)$. As the coupling
				increases, the order parameter rapidly converges toward $r(t)\simeq 1$ with
				negligible variance, signaling the emergence of full synchronization.
				
				The corresponding behavior of the persistent entropy
				$\mathrm{PE}(H_0)(t)$ for different coupling strengths $K$ is shown in Figure~\ref{fig:kuramoto_pe}. For weak
				coupling strengths ($K=0$ and $K=1$), the entropy remains approximately constant at relatively large values. This behavior reflects the absence of global synchronization and the continuous rearrangement of the topological
				structure of the functional network. 
				
				At intermediate coupling strengths ($K=3$ and $K=5$), the entropy begins to decrease after an initial transient and exhibits larger temporal fluctuations. This regime corresponds to partial synchronization, where coherent oscillator clusters emerge and compete with incoherent regions. The broader variance observed across realizations indicates enhanced sensitivity to initial
				conditions.
				
				For strong coupling ($K=10$ and $K=15$), persistent
				entropy progressively stabilizes at a low constant value, indicating a collapse of topological complexity. Once synchronization is reached, the entropy remains
				nearly constant, indicating a stable coherent dynamical state.
				
				\begin{figure}
					\centering
					\includegraphics[width=0.6\textwidth]{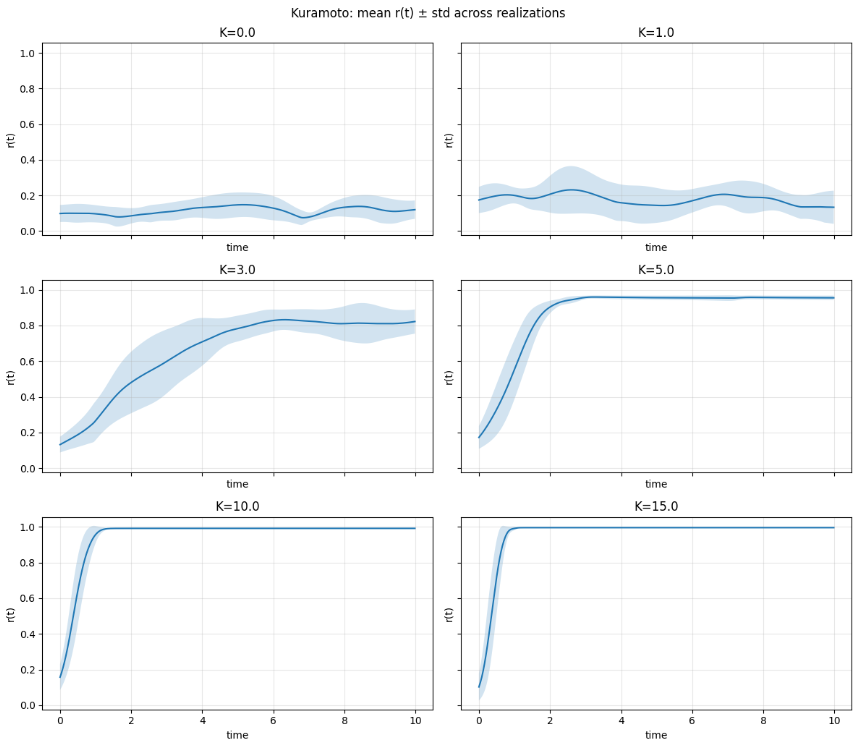}
					\caption{Kuramoto order parameter $r(t)$ for increasing coupling values $K$.
						Solid lines represent the mean across realizations, while shaded regions indicate the standard deviation. A sharp transition from incoherent dynamics to full synchronization is observed as $K$ increases.}
					\label{fig:kuramoto_r}
				\end{figure}
				
				\begin{figure}
					\centering
					\includegraphics[width=0.6\textwidth]{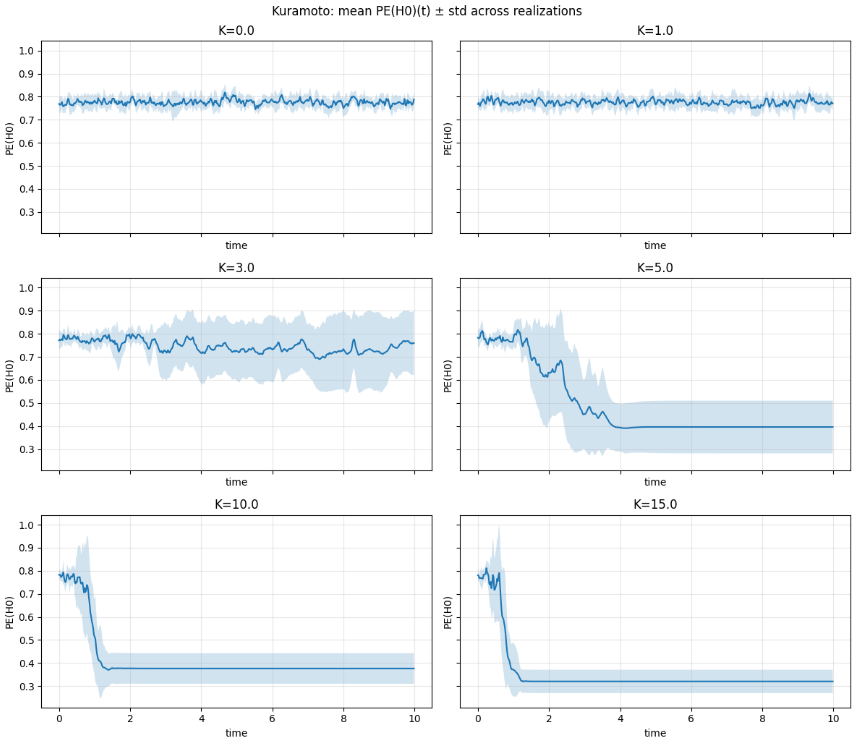}
					\caption{Temporal evolution of the persistent entropy $\mathrm{PE}(H_0)(t)$ for the Kuramoto model across different coupling strengths $K$. For low coupling, persistent entropy fluctuates around a high value, while for sufficiently large $K$ it rapidly drops and stabilises, indicating a collapse of topological complexity.}
					\label{fig:kuramoto_pe}
				\end{figure}
				
				\subsubsection{Relation to the general theorem}\label{sec:kuramoto-theorem}
				
				To illustrate the practical applicability of Definition~\ref{def:disp_cond} and Theorem~\ref{thm:finite}, we analyze a representative realization of the Kuramoto model with coupling strength $K=10$. This realization was selected because it exhibits a particularly clear synchronization process, allowing the dispersion--condensation mechanism to be identified unambiguously.
				
				Figure~\ref{fig:kuramoto_barcodes} shows persistence barcodes at several representative time points. Throughout the evolution, the persistence $H_0$ barcodes are characterized by one dominant bar that remains clearly separated from the remaining intervals. The other persistence intervals have much shorter lifetimes and correspond to topological noise. At early times, the dominant feature already contains a large fraction of the total persistence mass, although a non-negligible contribution is still distributed among several low-persistence intervals. As synchronization develops, these secondary lifetimes have already contracted, and an increasing proportion of the persistence becomes concentrated in the dominant bar, indicating that a stationary synchronized state has been reached.
				
				The $H_1$ persistence barcodes remain essentially empty throughout the evolution, with no significant one-dimensional persistent features. This suggests the absence of stable cyclic structures in the functional network, so the topological dynamics are driven almost entirely by changes in connectedness.
				
				\begin{figure}
					\centering
					\includegraphics[width=0.7\textwidth]{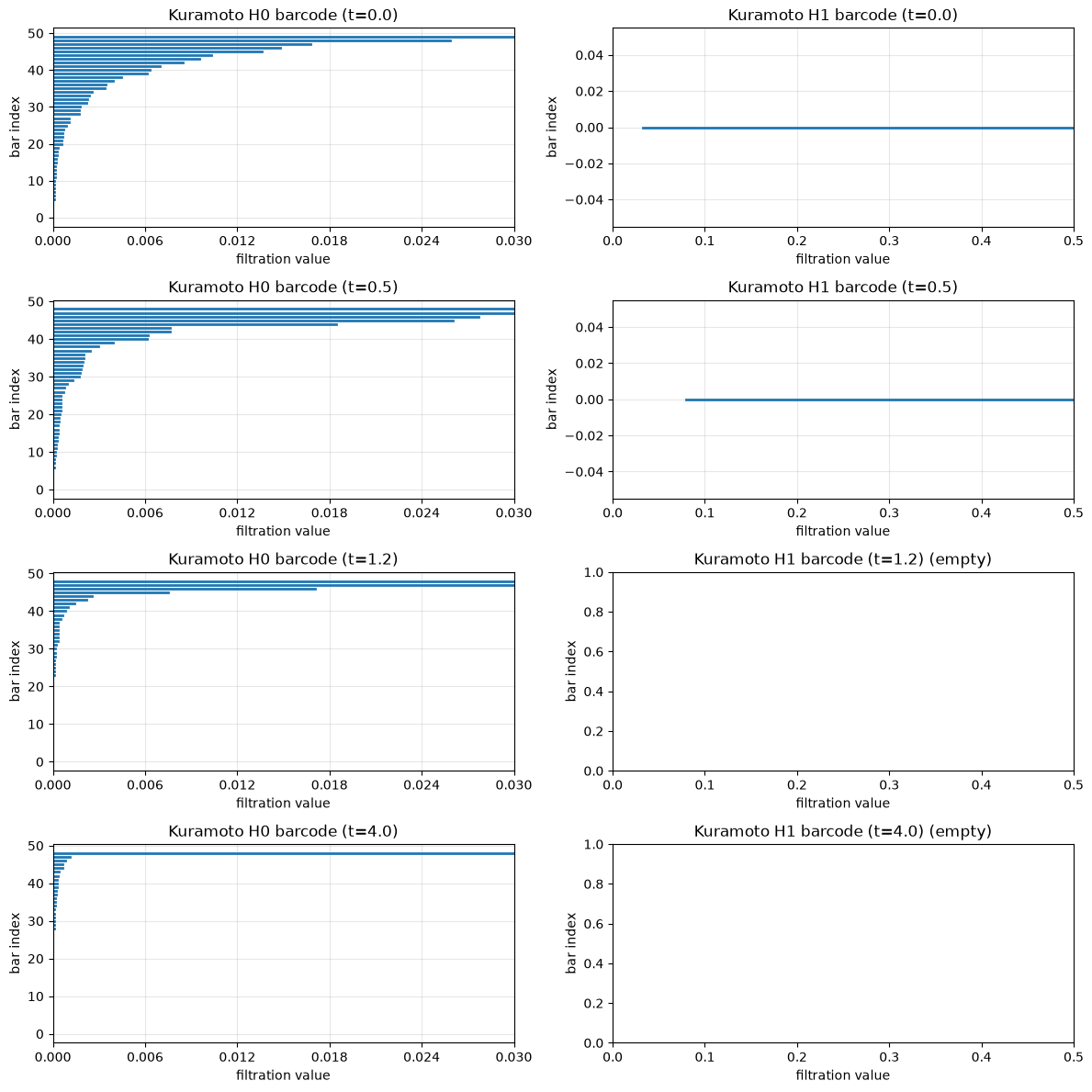}
					\caption{Persistence barcodes for the Kuramoto model at four representative time points ($t=0$, $t=0.5$, $t=1.2$ and $t=4$) for coupling strength $K=10$. Left panels show $H_0$ barcodes, while right panels show $H_1$ barcodes.}
					\label{fig:kuramoto_barcodes}
				\end{figure}
				
				For each persistence diagram $D(K,t)$, we evaluate the corresponding normalized persistence weights $p_i(K,t)$. Consistently with the evolution of the persistence barcodes shown in Figure \ref{fig:kuramoto_barcodes}, the persistent entropy remains low throughout the evolution and gradually decreases towards zero as the persistence mass becomes almost entirely concentrated in a single topological feature (Figure~\ref{fig:kuramoto_ev_entropy}). To reduce the influence of topological noise, lifetimes below a persistence threshold of $0.01$ are discarded.
				
				Applying the numerical detection procedure described in Section~\ref{sec:num_detection}, we set $k_{\max}=1$, since the barcodes in Fig.~\ref{fig:kuramoto_barcodes} show a single relevant persistent feature in the condensed phase. Additionally, we impose the constraint that the detected dispersed regime contains at least $10\%$ of the persistence diagrams. Under this constraint, the optimal parameters obtained are
				\[
				c^*=0.011, \qquad m^*=4, \qquad k^*=1, \qquad S^*=1.
				\]
				
				These parameters satisfy the hypotheses of Theorem~\ref{thm:finite} and yield a strictly positive entropy gap
				\[
				\Delta(c^*,m^*,k^*,S^*)
				=
				H_{\mathrm{disp}}(c^*,m^*)
				-
				H_{\mathrm{cond}}(c^*,k^*,S^*)
				=
				0.1813 >0. 
				\]
				
				The resulting classification successfully separates the dynamics into two well-defined regimes, as illustrated in Figure~\ref{fig:kuramoto_phases}: an initial dispersed phase $I^{-}=[0,1.18)$, characterized by the presence of at least four relevant persistence generators ($m^*=4$), and a later condensed phase $I^{+}=[1.4,10)$, dominated by a single persistent component ($k^*=1$).  Between these two regimes, a transition interval $I^{0}=[1.18,1.4)$ can be identified, which contains diagrams that are not classified as dispersed and exhibit unusually high persistent entropy values. This behavior arises because the normalized persistence weights associated with topological noise become extremely small, with not even four persistence weights exceeding the threshold of $0.011$.
				
				It can be verified that the theorem holds, since for all
				$t_{I^-} \in I^{-}$ and $t_{I^+} \in I^{+}$ it follows that
				\[
				\PE\bigl(D(K,t_{I^-})\bigr)
				- \PE\bigl(D(K,t_{I^+})\bigr)
				\;\ge\; \Delta(c^*,m^*,k^*,S^*)
				\]
				
				\begin{figure}
					\centering
					\includegraphics[width=0.4\textwidth]{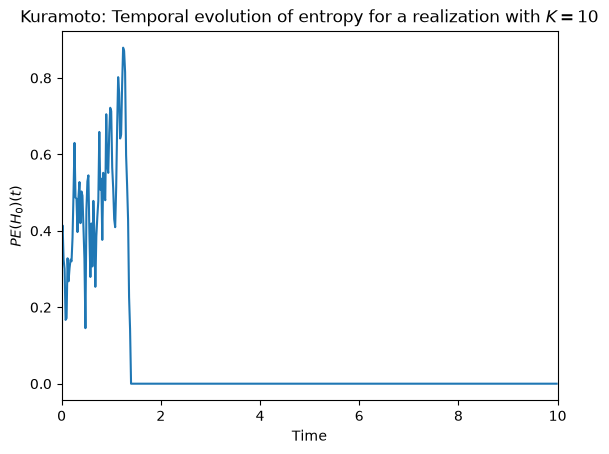}
					\caption{Temporal evolution of $\mathrm{PE}(H_0)(t)$ for a single realization with coupling strength $K=10$.}
					\label{fig:kuramoto_ev_entropy}
				\end{figure}
				
				\begin{figure}
					\centering
					\includegraphics[width=0.6\textwidth]{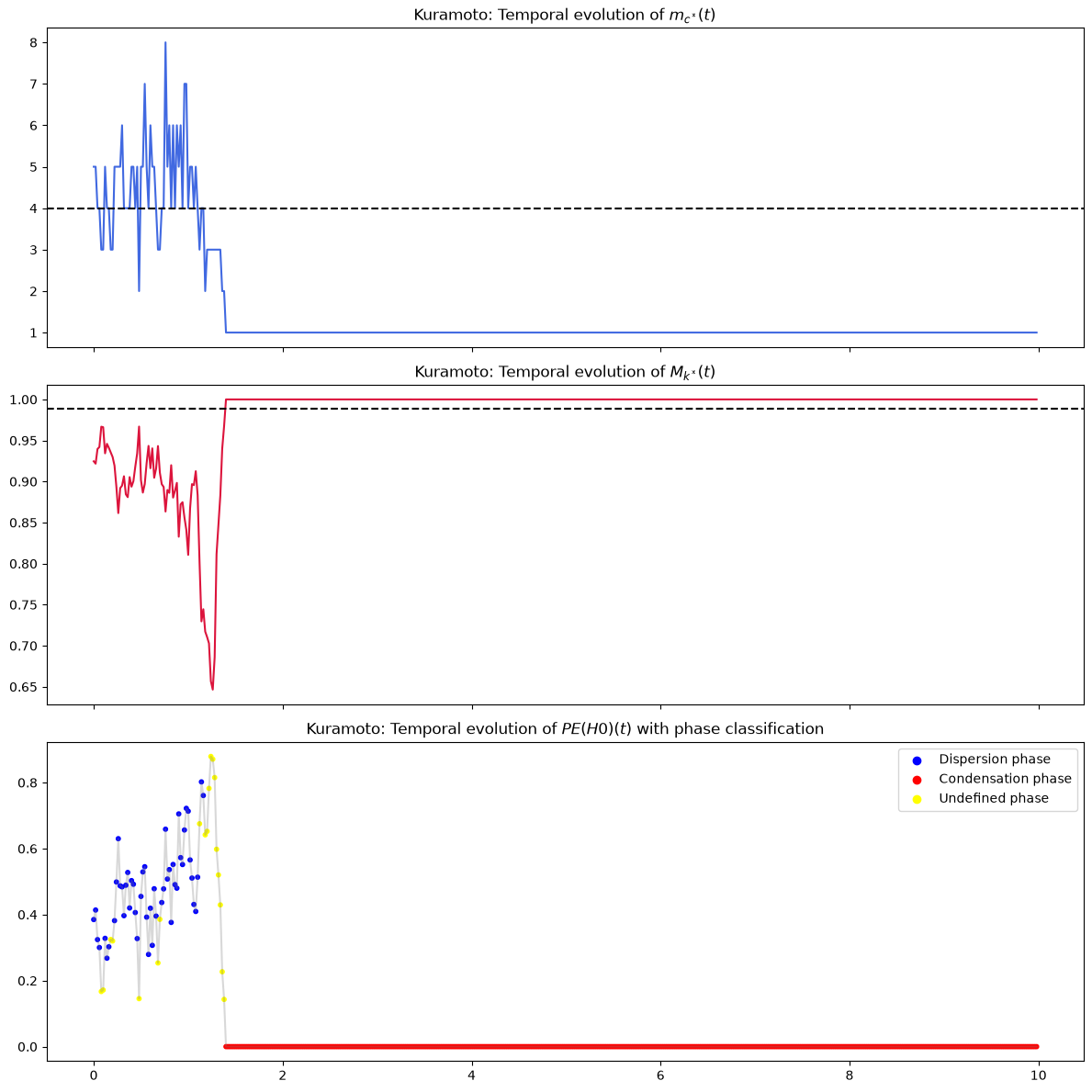}
					\caption{Temporal evolution of $m_{c^*}(t)=\#\{i:p_{i}(t)\geq c^*\}$, $M_{k^*}(t) = \sum_{i=1}^{k^*}p_{i}(t)$ and phase classification between dispersed ($m_{c^*}(t)\ge m^*$) and condensed ($M_{k^*}(t)> 1-c^*$) regimes.}
					\label{fig:kuramoto_phases}
				\end{figure}
				
				\subsection{Vicsek collective motion}\label{sec:vicsek}
				
				\subsubsection{Model and experimental setup}
				
				The Vicsek model is a paradigmatic minimal model for collective motion in active
				matter, describing $N$ self-propelled particles moving in a two-dimensional
				domain with periodic boundary
				conditions~\citep{Vicsek1995_Novel,Toner2005_Hydrodynamic}. Each particle $i$ is
				characterized by its position $\bm r_i(t)\in[0,L)^2$ and a velocity of constant
				modulus $v_0$ with direction $\theta_i(t)$. The dynamics is defined by the
				discrete-time update rules
				\[
				\theta_i(t+\Delta t) = \mathrm{Arg}\!\left( \sum_{j:\,\|\bm r_j(t)-\bm r_i(t)\|<r} e^{\mathrm{i}\theta_j(t)} \right) + \eta\,\xi_i(t), \quad
				\bm r_i(t+\Delta t) = \bm r_i(t) + v_0 \Delta t\,(\cos\theta_i(t),\sin\theta_i(t)),
				\]
				where $r$ is the interaction radius, $\eta$ is the noise amplitude, and
				$\xi_i(t)$ are independent random variables uniformly distributed in
				$[-\tfrac{1}{2},\tfrac{1}{2}]$. For low noise or high density, the system
				undergoes a transition from a disordered phase to an ordered flocking phase
				characterized by collective alignment of velocities. The standard measure of
				collective order is the polarization
				\[
				\psi(t) = \frac{1}{N} \left\| \sum_{i=1}^N (\cos\theta_i(t),\sin\theta_i(t)) \right\| \in [0,1].
				\]
				
				To probe the topological structure of Vicsek dynamics, we compute persistent
				homology from pairwise distance matrices derived from particle states. We
				consider two alternative distance definitions: a velocity-based distance
				\[
				d_{ij}^{(v)}(t) = \|\bm v_i(t) - \bm v_j(t)\|,
				\qquad 
				\bm v_i(t) = v_0(\cos\theta_i(t), \sin\theta_i(t)),
				\]
				and an orientation-based distance
				\[
				d_{ij}^{(\theta)}(t) = 1 - \left|\cos\big(\theta_i(t)-\theta_j(t)\big)\right|,
				\]
				both of which produce symmetric matrices suitable for Vietoris--Rips filtrations.
                
The Vicsek dynamics are simulated in a two-dimensional periodic square domain of side length
$L=5$, containing $N=300$ self-propelled particles with constant speed $v_0=1$.
The system is evolved using a time step of $\Delta t=0.1$ for $500$ iterations, corresponding
to a total simulation time of $T=50$. The interaction radius is fixed to $r=0.5$.
We investigate six noise amplitudes,
$\eta \in \{0.05,0.1,0.5,1,3,5\}$, and for each value of $\eta$ we generate five independent
realizations with random initial positions and orientations.

To characterize the topological evolution of the particle system, we compute persistent homology
from pairwise distance matrices obtained from the particle velocities. In accordance with
the theoretical framework, we identify the control parameter with the simulation time,
$\lambda \equiv t$. For each noise amplitude $\eta$, we construct
a time-dependent family of persistence diagrams $D(\eta,t)$, obtained by applying Vietoris-Rips persistent homology to the pairwise velocity-distance matrices. For the essential infinite $H_0$ interval, the death
value is replaced by the maximum entry of the corresponding distance matrix.

\subsubsection{Results}
				
                Dynamical evolution of $\psi(t)$ for increasing noise
				amplitudes is shown in Figure~\ref{fig:vicsek_psi}. For small values of $\eta$ ($\eta=0.05$ and $\eta=0.1$), the system rapidly develops global alignment
				and converges toward an ordered flocking state with $\psi(t)\simeq 1$ and
				minimal variability across realizations. As $\eta$ increases, convergence becomes
				slower and more heterogeneous, while for sufficiently large noise polarization remains strongly suppressed and fluctuating, indicating a disordered
				regime.
				
				\begin{figure}
					\centering
					\includegraphics[width=0.6\textwidth]{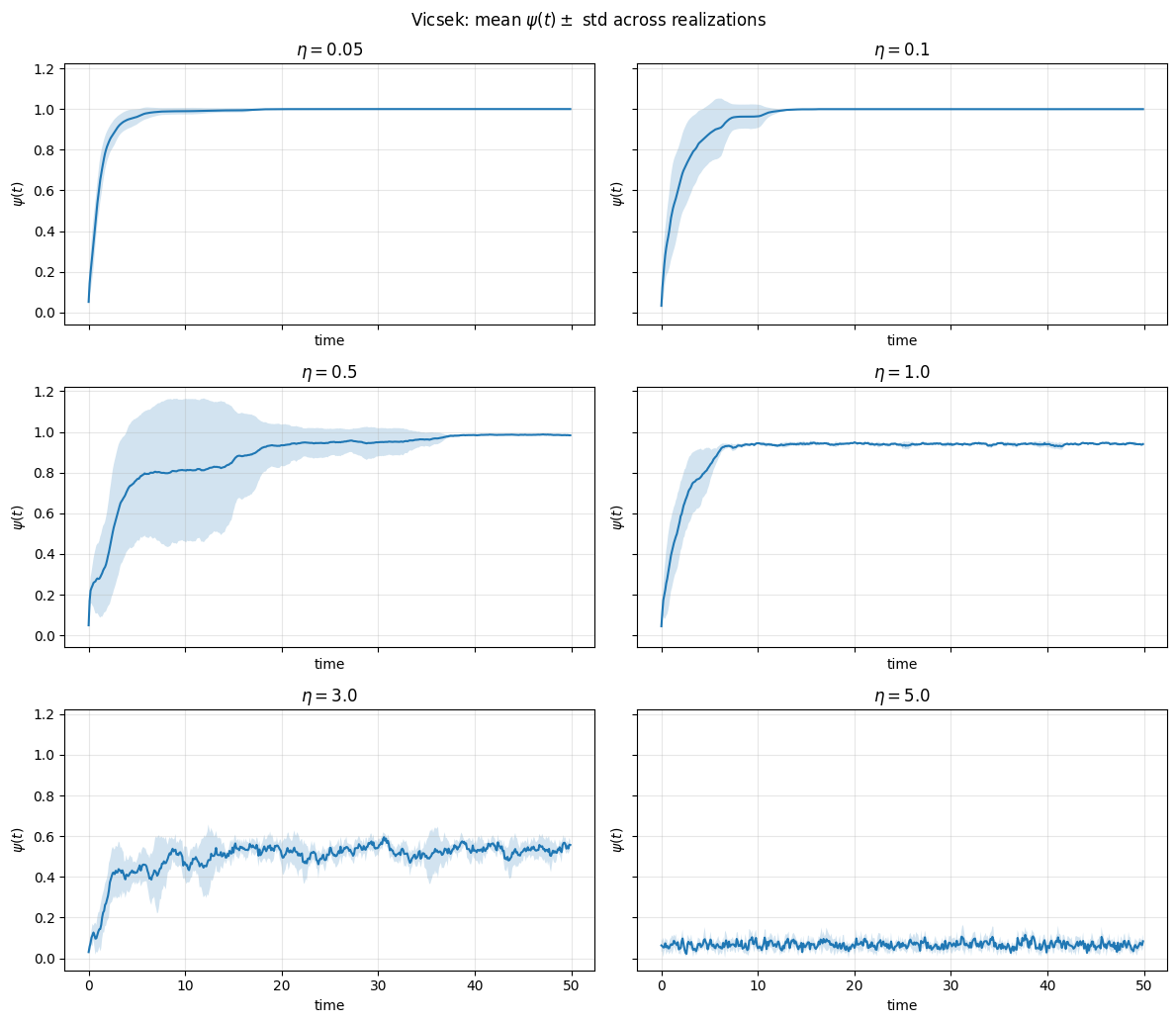}
					\caption{Mean Vicsek polarization $\psi(t)$ with standard deviation across
						realizations for different noise amplitudes $\eta$.
						Low noise leads to rapid convergence toward $\psi(t)\simeq 1$,
						while increasing noise delays or suppresses global ordering.}
					\label{fig:vicsek_psi}
				\end{figure}
				
				The corresponding topological evolution is captured by the temporal behavior of $\mathrm{PE}(H_0)(t)$ shown in Fig.~\ref{fig:vicsek_pe}. Because the persistent homology computed from the Vicsek dynamics contains a large number of short-lived topological features generated by topological noise, the persistent entropy was evaluated after applying a persistence threshold of $0.01$. This truncation removes low-persistence features that are associated with noise while preserving the dominant topological structures responsible for the collective dynamics. 
                
                For low noise amplitudes ($\eta=0.05$ and $\eta=0.1$), the entropy exhibits a rapid decay from initially large values toward a low-entropy stationary state. This behavior reflects the progressive reduction of topological complexity as particles become globally aligned and aggregate into a coherent flocking configuration.
				
				At intermediate noise levels ($\eta=0.5$ and $\eta=1.0$), the decay becomes
				slower and the entropy stabilizes at larger values with persistent temporal
				fluctuations. This indicates the coexistence of partially ordered and
				disordered structures, where local alignment competes with stochastic
				reorientation, preventing complete condensation into a single coherent state.
				
				In the high-noise regime ($\eta=3.0$ and $\eta=5.0$), the entropy remains approximately constant at high values throughout the simulation. The absence of
				significant decay reveals that strong noise continuously disrupts collective alignment, maintaining a highly fragmented and dynamically disordered configuration.

				\begin{figure}
					\centering
					\includegraphics[width=0.6\textwidth]{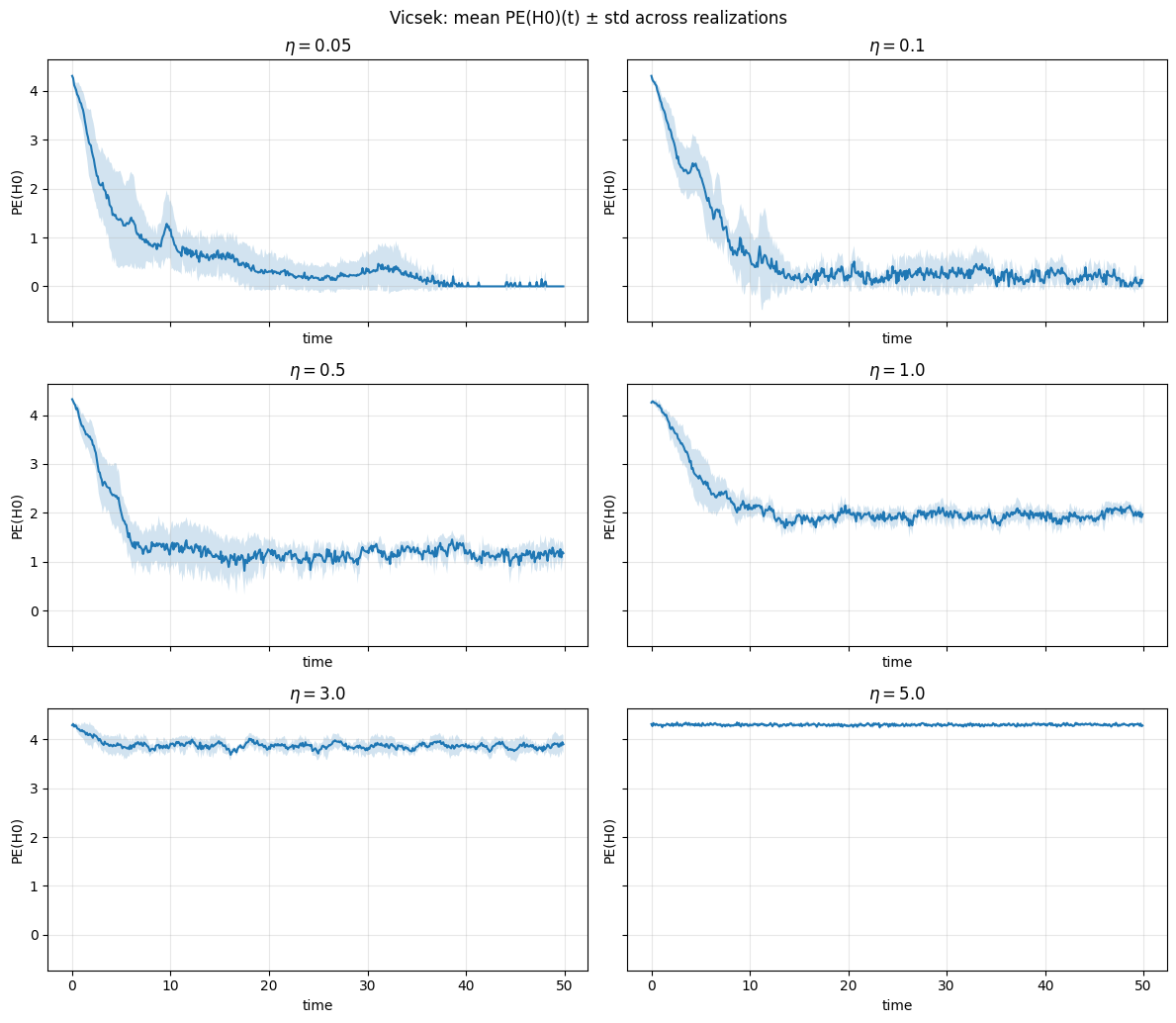}
					\caption{Temporal evolution of the persistent entropy
						$\mathrm{PE}(H_0)(t)$ for the Vicsek model across increasing noise amplitudes
						$\eta$. Low-noise regimes exhibit a rapid decay and stabilisation of persistent
						entropy, whereas higher noise levels are associated with sustained
						fluctuations.}
					\label{fig:vicsek_pe}
				\end{figure}

				\subsubsection{Relation to the general theorem}\label{sec:vicsek-theorem}
				
				Within the framework of Theorem~\ref{thm:finite}, we identify the control parameter $\lambda$ with the time variable $t$. To assess the applicability of the theoretical framework, we consider a representative realization at $\eta=0.05$. For each time step $t$, we extract the normalized persistence weights from the corresponding persistence diagrams.
				
				\begin{figure}
					\centering
					\includegraphics[width=0.7\textwidth]{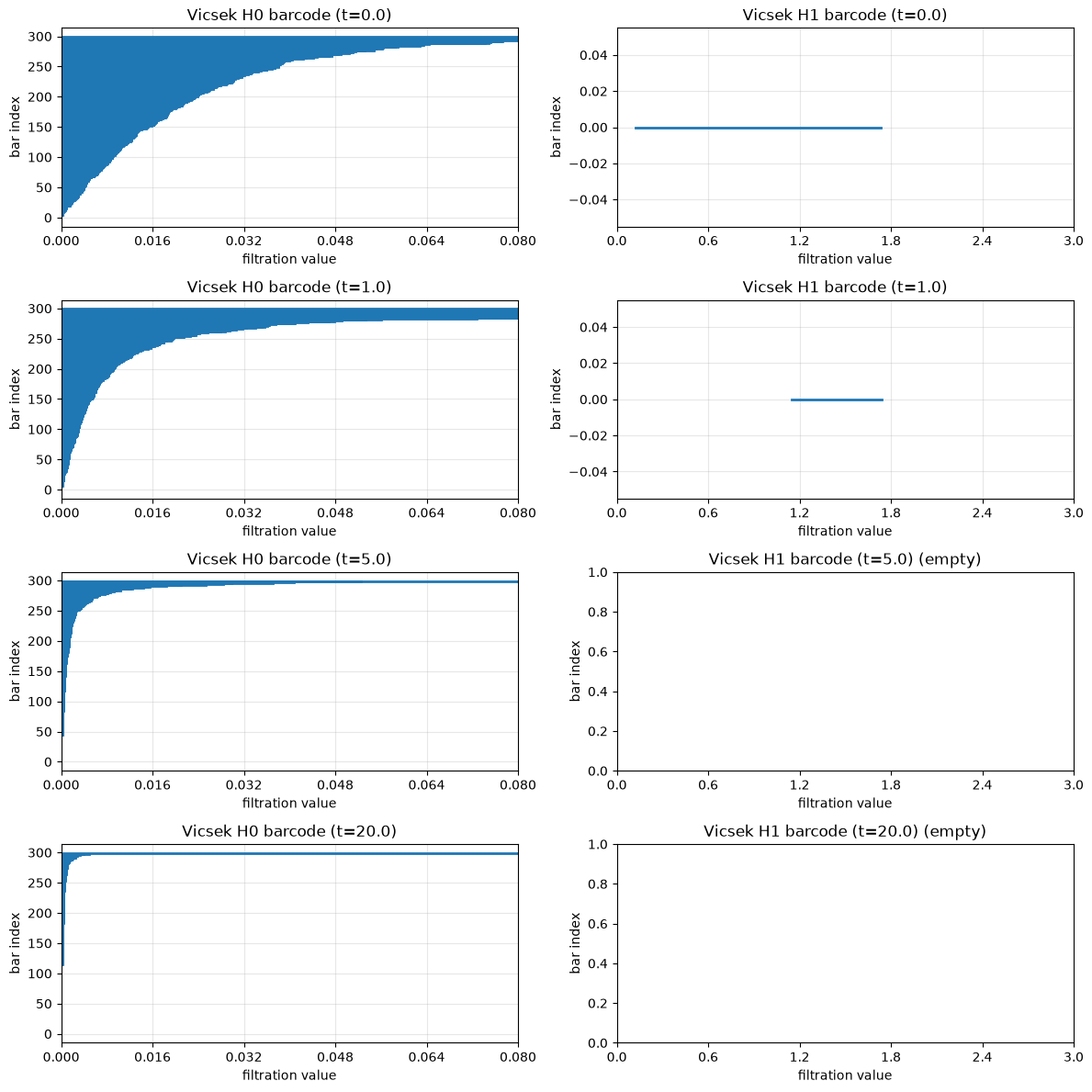}
					\caption{Persistence barcodes for the Vicsek model at four representative times ($t=0$, $t=1$, $t=5$, $t=20$) for noise amplitude $\eta=0.05$. Left panels show $H_0$ barcodes, while right panels show $H_1$ barcodes.}
					\label{fig:vicsek_barcodes}
				\end{figure}
				
				Figure~\ref{fig:vicsek_barcodes} illustrates the evolution of the persistence barcodes at low noise. Unlike the persistent entropy analysis, no persistence threshold is applied to the barcodes, allowing the complete topological evolution, including short-lived features associated with topological noise, to be visualized. The left column displays the $H_0$ barcodes, associated with connected components, whereas the right column shows the corresponding $H_1$ barcodes, associated with loop-like structures. Consistently with the entropy dynamics reported in Fig.~\ref{fig:vicsek_pe}, the flocking regime is accompanied by a progressive concentration of persistence into a single dominant connected component.
				
				At the initial time ($t=0$), the $H_0$ barcode exhibits a broad distribution of lifetimes, reflecting a fragmented configuration with many disconnected clusters. By $t=1$, most intervals have already contracted toward small filtration values, indicating the aggregation of particles into a coherent aligned state. At later times ($t=5$ and $t=20$), the $H_0$ barcode becomes nearly trivial, with connected components merging at very small filtration scales.

				The $H_1$ barcodes reveal a transient one-dimensional topological structure at the beginning of the dynamics. A single nontrivial $H_1$ feature is present at $t=0$ and $t=1$, while the $H_1$ barcodes are empty at all later sampled times. This behavior indicates that loop-like structures are short-lived and become unstable as global alignment emerges.
				
				\begin{figure}
					\centering
					\includegraphics[width=0.4\textwidth]{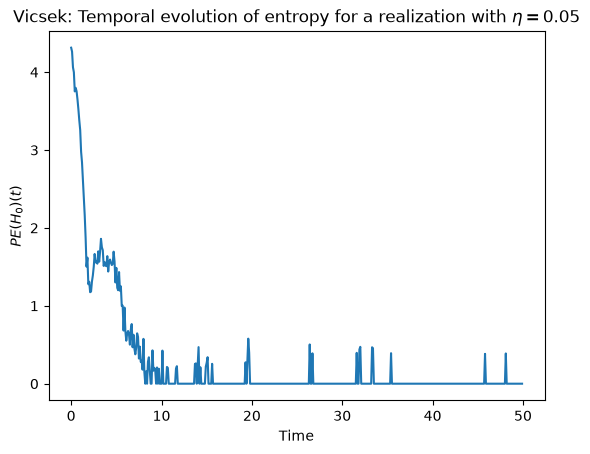}
					\caption{Temporal evolution of $\mathrm{PE}(H_0)(t)$ for a single realization with $\eta=0.05$}
					\label{fig:entropy_0.05}
				\end{figure}
				
				The persistent entropy displayed in Figure~\ref{fig:entropy_0.05}, computed after applying a persistence threshold of $0.01$ to filter out low-persistence features associated with topological noise, exhibits a rapid decay from an initially large value to a regime of very low entropy. This behavior reflects the emergence of a dominant persistence interval that progressively concentrates most of the persistence weight. After this transient, the entropy remains close to zero, with occasional fluctuations caused by finite-size effects and topological noise.
				
			Applying the numerical detection procedure described in Section~\ref{sec:num_detection} with $k_{\max}=1$, and, as in the Kuramoto experiment, imposing the additional constraint that the dispersed regime contains at least $10\%$ of the persistence
diagrams, we obtain the optimal phase-separation parameters
				\[
				c^* = 0.009,\qquad m^*=11,\qquad k^*=1,\qquad S^*=1.
				\]
                yielding the following entropy gap
				\[
				\Delta(c^*,m^*,k^*,S^*) = H_{\text{disp}}(c,m)-H_{\text{cond}}(c,k,S) = 0.5098.
				\]
				
				Figure~\ref{fig:phases_vicsek} shows the evolution of $m_{c^*}$ and $M_{k^*}$. Since $k^*=1$, the classification is completely determined by the most persistent interval. During the dispersion phase, at least $m^*=11$ intervals exceed the threshold $c^*$, leading to $m_{c^*}(t)\geq 11$ for all $t\in I^{-}$ In contrast, the condensation phase is characterized by a dominant interval carrying all the persistence mass, so that $M_{k^*}(t)>1-c$ and $m_{c^*}(t)=1$ for all $t\in I^{+}$.

				  From the Figure \ref{fig:phases_vicsek}, two clearly distinct phases can be identified:
				\[
				I^{-}=[0,5.6),
				\qquad
				I^{+}=[9,50].
				\]
                and the transition phase, $I^0$ between these two regimes.
				
				Finally, for every pair of time instants $t_{I^-}\in I^{-}$ and $t_{I^+}\in I^{+}$, the persistent entropy satisfies
				\[
				\PE\bigl(D(\eta,t_{I^-})\bigr)
				- \PE\bigl(D(\eta,t_{I^+})\bigr)\,\ge\,\Delta(c,m,k,S).
				\]
				
				Hence, the observed entropy difference between the dispersion and condensation regimes is uniformly bounded from below by the theoretical gap predicted by the theorem, providing direct numerical evidence for the phase-separation mechanism established in Section~\ref{sec:num_detection}.
				
				\begin{figure}
					\centering
					\includegraphics[width=0.6\textwidth]{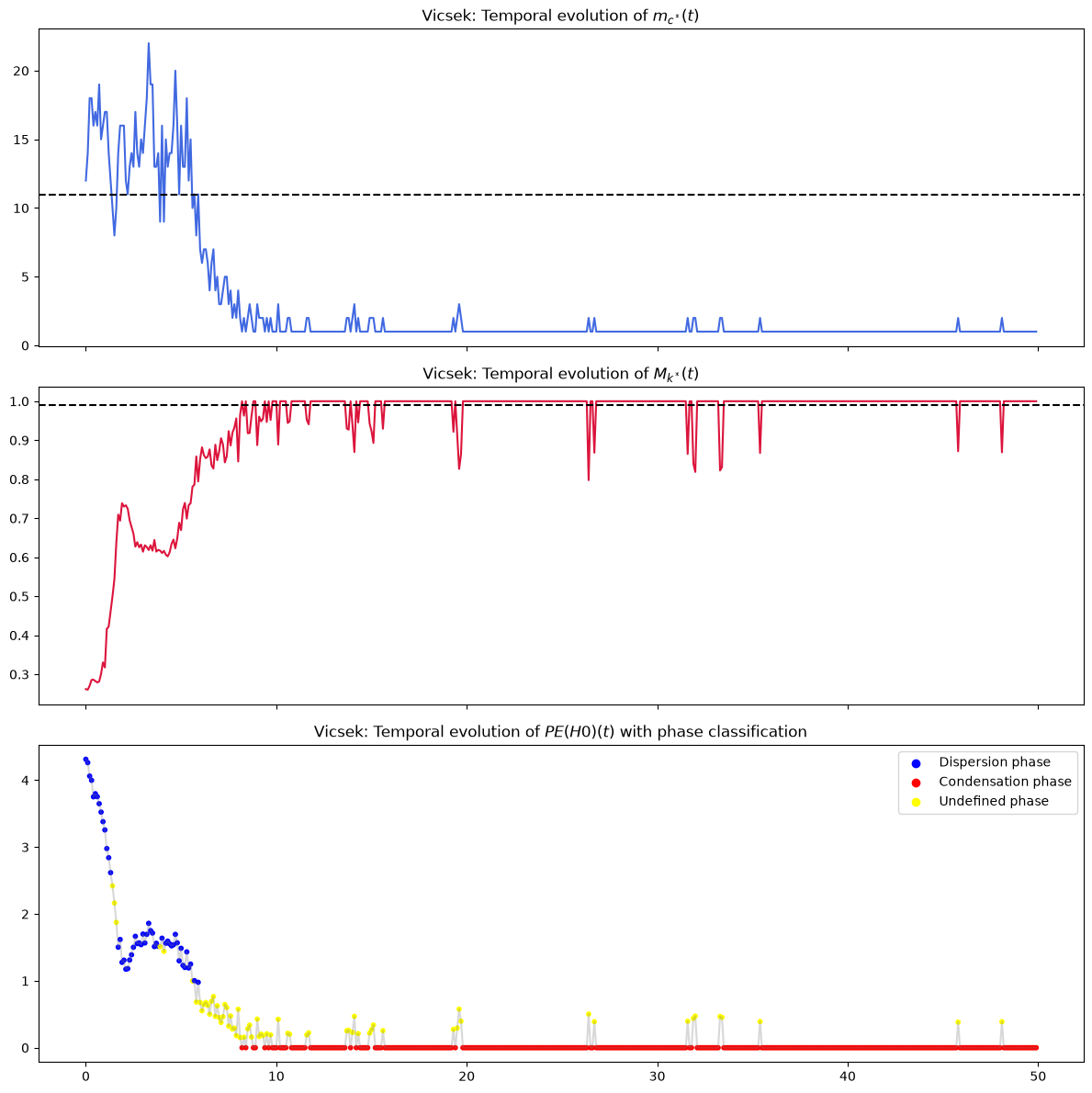}
					\caption{Temporal evolution of $m_{c^*}(t)=\#\{i:p_{i}(t)\geq c^*\}$, $M_{k^*}(t) = \sum_{i=1}^{k^*}p_{i}(t)$ and phase classification between dispersed ($m_{c^*}(t)\ge m^*$) and condensed ($M_{k^*}(t)> 1-c^*$) regimes.}
					\label{fig:phases_vicsek}
				\end{figure}

					\section{Conclusions}
					
					We have established a model-agnostic criterion showing that persistent entropy
					detects phase transitions whenever the transition reorganizes the lifetime distribution of the associated persistence diagrams from a
					\emph{dispersed} configuration into a \emph{condensed} one. If at least $m$ bars
					each carry a normalized weight of at least~$c$ in one regime, while in the other
					the $k<m$ most persistent bars concentrate more than $1-c$ of the total
					normalized persistence mass within a diagram of at most $S$ off-diagonal points, then
					Theorem~\ref{thm:finite} separates the two regimes by the explicit gap
					$\Delta(c,m,k,S)>0$ with probability at least $1-2\delta$ at any finite system
					size, and Corollary~\ref{cor:asymptotic} shows the separation is inherited by
					the limiting diagrams after truncation at a generic threshold. Since persistent
					entropy is invariant under a global rescaling of lifetimes, it can only register
					changes in the \emph{shape} of the weight distribution; the
					dispersion--condensation mechanism is precisely that change, which explains both
					why persistent entropy succeeds as a phase detector across unrelated systems and
					what it is detecting. The guaranteed sensitivity is computable in advance from
					$(c,m,k,S)$ alone, with no reference to the filtration, the embedding, the
					homological degree, or the system size.
					
					Because the hypotheses are stated entirely in terms of normalized weights, they
					are directly checkable on empirical barcodes. The numerical procedure of
					Section~\ref{sec:num_detection} exploits this: it labels each observed diagram as
					dispersed, condensed, or unclassified, and selects the parameters that trade the
					guaranteed gap against the fraction of diagrams the classification can label,
					returning a certified lower bound on the entropy separation for the system at
					hand. Diagrams satisfying neither hypothesis are left unlabeled rather than
					forced into a phase, so the method is conservative by construction.

					The principal application is to the topology of convolutional filters.
					Gabrielsson and Carlsson~\cite{gabrielsson2019exposition} established that
					trained filters organize around circular structures; our results show that
					this organization is reached through a phase transition, and identify when.
					The first layers of $M(64,32,64)$ on MNIST and $C(64,32,64)$ on CIFAR-10 both
					undergo a clean dispersion-to-condensation transition, with certified gaps of
					$2.68$ and $1.16$, but on very different timescales: MNIST condenses within the
					first few hundred iterations, CIFAR-10 only after roughly eleven thousand, well
					past the point at which test accuracy has plateaued. The onset of topological
					organization is therefore not simply a by-product of convergence in loss. The
					second MNIST layer traverses the same mechanism in reverse, dispersing into
					several coexisting $H_1$ cycles as training proceeds, which the theorem
					accommodates unchanged, since it constrains the two regimes rather than the
					direction in which the control parameter sweeps them. That the criterion is not
					specific to learning dynamics is confirmed by the Kuramoto and Vicsek models,
					which exhibit the corresponding $H_0$ condensation at the onset of
					synchronization and of flocking. In every case the measured entropy difference
					exceeded the theoretical gap.
					
					The conditions are sufficient, not necessary: a transition that reorganizes a
					barcode without redistributing normalized persistence mass is invisible to
					persistent entropy, so a missing entropy gap is diagnostic rather than merely
					negative, identifying which hypothesis fails. This behavior is visible in some of our phase-classification plots presented in the experimental results, where persistence diagrams that satisfy neither condition are correctly left unclassified. The two hypotheses are currently coupled through a
					single constant~$c$, which forces the condensed regime to absorb more than
					$1-1/m$ of the persistence mass; decoupling them should yield a sharper gap.
					Developing hypothesis tests for dispersion and condensation from finitely many
					realizations, propagating the estimation error in $(c,m,k,S)$ into the
					guaranteed gap, and characterizing which filtrations induce
					dispersion-condensation structure for a given data modality all appear
					tractable within the present framework.

                \section*{Code Availability}
                
                The source code used to generate the numerical results presented in this work is available  in the following GitHub repository:
                
                \url{https://github.com/MarcosGutPozo/Persistent-Entropy-as-a-Detector-of-Phase-Transitions}.

				\bibliographystyle{plain}
				
				\bibliography{references}

			\end{document}